\documentclass{article}
\usepackage[utf8]{inputenc}

\usepackage[preprint]{neurips_2026}

\usepackage[T1]{fontenc}
\usepackage{amsfonts}       
\usepackage{microtype}      
\usepackage{amsmath}
\usepackage{amssymb}
\usepackage{amsthm}
\usepackage{graphicx}
\graphicspath{{v_old/images_3/}{v_old/images_2/}{v_old/images/}}
\usepackage{hyperref}
\usepackage{cleveref}
\usepackage{booktabs}
\usepackage{tcolorbox}
\usepackage{algorithm}
\usepackage{algorithmic}
\usepackage{enumitem}
\newlist{inlinelist}{enumerate*}{1}
\setlist[inlinelist,1]{label=\alph*), nosep}

\newtheorem{theorem}{Theorem}[section]
\newtheorem{lemma}[theorem]{Lemma}

\newcommand{\R}{\mathbb{R}}

\newcommand{\tr}{\operatorname{tr}}

\newcommand{\N}{\mathbb{N}}
\newcommand{\scal}[2]{{\langle{#1},{#2}\rangle}}

\newcommand{\TITLE}{
Distributionally\,Faithful Imputation via Positive Semi--Definite Kernel Density Estimation
}

\title{\TITLE}
\author{Andrea Basteri\\
Inria - Ecole Normale Supériore\\
Paris, France\\
\texttt{andrea.basteri@inria.fr}
\and
\textbf{Carlo Ciliberto}\\
Department of Computer Science - UCL\\
London, UK\\
\texttt{c.ciliberto@ucl.ac.uk}
\and
\textbf{Alessandro Rudi}\\
SDA Bocconi School of Management\\
Milan, Italy\\
\texttt{alessandro.rudi@sdabocconi.it}
}

\date{}

\DeclareMathOperator{\Tr}{Tr}

\DeclareMathOperator{\KL}{KL}

\DeclareMathOperator*{\argmin}{arg\,min}

\begin{document}

\maketitle

\begin{abstract}
Missing values undermine statistical inference and machine--learning pipelines, yet most imputation methods rely on heuristics or restrictive parametric assumptions that ignore the \emph{joint} data distribution. We recast imputation under \emph{missing completely at random} (MCAR) as density estimation from masked observations: estimate a distribution whose \emph{observed marginals exactly match} those in the data. Leveraging \emph{positive semi--definite} (PSD) kernel densities we obtain a \emph{convex} empirical--risk problem with closed--form marginals, solvable by a Newton interior--point method. The resulting \emph{PSD-Impute} model yields both \emph{single} and \emph{multiple} imputations from the same fitted density, enjoys statistical consistency with fast adaptive excess risk beating the curse of dimensionality for very regular probabilities. Preliminary experiments on one synthetic and eleven real--world datasets already indicate competitive distributional accuracy compared with popular imputation baselines, suggesting strong practical promise.
\end{abstract}


\section{Introduction}
\label{sec:introduction}

Missing data permeate modern data science. An electronic health-record may lack key laboratory values because instruments failed; MRI scans often omit contrasts for cost reasons; ecology studies record only a subset of species interactions; even a movie--rating matrix is intentionally sparse so that users see only a fraction of items. Each of these scenarios leaves practitioners with the same quandary: proceed with a biased and potentially invalid analysis, or \emph{impute} the gaps before any downstream task. Over four decades of research have produced a zoo of techniques, from the classical Expectation--Maximisation (EM) algorithm \citep{dempster1977maximum} to sophisticated deep generative models \citep{yoon2018gain}, yet today’s pipelines still debate which strategy to trust.

A central point of contention is \emph{what} constitutes a ``good'' imputation. Popular root--mean--square error (RMSE) figures reward methods that predict the conditional mean of each missing entry. Such pointwise targets ignore the multivariate dependencies that motivate imputation in the first place and typically understate uncertainty \citep{naf2023imputation}. Recent empirical work therefore advocates distributional metrics—energy distance \citep{szekely2017energy} or optimal transport (OT) \citep{villani2009optimal}—which measure how closely an imputed data set reproduces the \emph{joint} structure of the ground truth \citep{muzellec2020missing, naf2024good}. Striving for distributional fidelity, however, brings new modelling challenges: it requires a coherent density, not merely conditional means, and pushes optimisation beyond the safe havens of convexity.

Existing families each attack the problem from a different angle. Joint--model approaches posit a full likelihood and propagate uncertainty but cling to multivariate Normality, limiting real--world flexibility. Fully Conditional Specification (FCS) methods such as MICE \citep{van2011mice} grant practitioners enormous modelling freedom, yet each iteration costs $\mathcal{O}(d^{2})$ fits and may lack a compatible joint density \citep{Murray2018}. Low--rank matrix completion \citep{hastie2015matrix} triumphs in recommender systems but presumes low intrinsic rank and optimises only pointwise errors. Tree--based ensemblers like missForest \citep{stekhoven2012missforest} offer non--linear power at the expense of distributional faithfulness, while deep neural networks boast capacity and GPU acceleration but face non--convex training and instability across replicates \citep{wang2021deep}. What remains missing is a method that is at once flexible, distributionally principled, and anchored in convex optimisation.

This paper fills that gap by reframing imputation under the classical \emph{missing completely at random} (MCAR) mechanism as a \emph{marginal--matching} density--estimation problem. Because the data--generating density $p^\star$ is fully determined by its observed marginals, we estimate a distribution that \emph{exactly} matches those marginals while remaining otherwise parsimonious. To make the idea practical we exploit \emph{positive semi--definite} (PSD) kernel densities \citep{marteau2020non,rudi2021psd}, whose closed--form marginals and convex log--likelihood enable a structured Newton interior--point solver that runs on modest CPUs. The resulting \textsc{PSD-Impute} model generates both single and multiple imputations from the same density and, crucially, optimises a convex surrogate for distributional fidelity.

\paragraph{Contributions.} (1) We cast MCAR imputation as convex marginal Kullback--Leibler minimisation, bridging kernel density estimation and missing--data methodology; (2) we introduce PSD kernel densities to imputation and develop an efficient interior--point algorithm exploiting Kronecker structure; (3) we establish statistical consistency, an $\mathcal{O}(1/\sqrt{N})$ excess--risk bound, (4) our single model unifies single and multiple imputation via conditional means or sampling; (5) preliminary experiments on one synthetic and eleven real datasets show that \textsc{PSD-Impute} achieves competitive distributional performance against strong baselines, motivating further engineering work.

The remainder of the article reviews related work (Section~\ref{sec:related-work}), details methodology (Section~\ref{sec:methodology}), reports experiments (Section~\ref{sec:experiments}), and concludes with limitations and future directions (Section~\ref{sec:discussion}).

\section{Related Work}
\label{sec:related-work}

\noindent We now position \textsc{PSD-Impute} within the broader landscape of imputation methods, highlighting where our approach departs from convention.

Research on imputation has evolved along several intertwined strands. \emph{Joint-model} approaches were the first to emerge; by specifying a full likelihood and fitting it via EM or Bayesian MCMC they offer principled uncertainty quantification, yet their reliance on multivariate Normality and cubic scaling curtail widespread adoption. \emph{Fully Conditional Specification} (FCS) methods, epitomised by MICE \citep{van2011mice}, circumvent rigid distributional assumptions by letting users choose a regression model for each variable. This flexibility comes at the price of $\mathcal{O}(d^{2})$ regressions per iteration and, when the chosen conditionals are incompatible, a lack of coherent joint density that complicates distribution-level evaluation \citep{Murray2018}.

Parallel to these developments, \emph{low-rank} and matrix-completion techniques treat missingness as a signal-reconstruction problem. Convex nuclear-norm programs such as SoftImpute \citep{hastie2015matrix} achieve impressive performance on recommender data but assume low intrinsic rank and optimise pointwise criteria. Tree-based ensemblers like missForest \citep{stekhoven2012missforest} have become popular for their ease of use and ability to capture nonlinear interactions, yet evidence shows they can severely distort multivariate structure under distributional metrics \citep{naf2024good}. The deep-learning surge brought methods such as GAIN \citep{yoon2018gain}, VAEM \citep{ma2020vaem}, and diffusion-based CSDI \citep{tashiro2021csdi}. While powerful in principle, these models optimise non-convex objectives aligned with RMSE, exhibit high variability across runs, and can under-represent uncertainty \citep{wang2021deep}.

A separate line views imputation itself as a \emph{distributional} problem. Muzellec et al.\ \citep{muzellec2020missing} propose an optimal-transport barycentric formulation that directly targets distributional distance, but the resulting optimisation is non-convex and suffers from the curse of dimensionality. Survey work by Morvan et al.\ \citep{morvan2024imputation} and theoretical analyses by Josse et al.\ \citep{josse2024consistency} underscore that the quality of imputation must be judged in the context of the entire analysis pipeline. These insights motivate our own density-matching perspective.

Against this backdrop, \textsc{PSD-Impute} distinguishes itself by uniting four desirable properties rarely found together: (i) a \emph{convex} training objective; (ii) \emph{distributional fidelity} ensured via exact marginal matching; (iii) the \emph{universality} of kernel densities, free of Gaussian or low-rank assumptions; and (iv) CPU-friendly optimisation owing to structured Newton steps. By offering both single and multiple imputations from a single coherent density, it advances the state of the art toward principled, distribution-aware treatment of missing data.

\noindent Building on these insights, we next formalise our marginal--matching objective and show how PSD kernels make it tractable.

\section{Problem Formulation}\label{sec:problem}\label{sec:methodology}

\paragraph{Notation.}  Fix the dimension $d\in\mathbb{N}$ and denote $[d]=\{1,\dots,d\}$.  For a (possibly ordered) subset $S\subseteq[d]$ let $NS=[d]\setminus S$.  Given $x=(x_1,\dots,x_d)\in\mathbb{R}^d$ we write $x_S=(x_j)_{j\in S}\in\mathbb{R}^{|S|}$.  For any probability density $p$ on $(\mathbb{R}^d,\mathrm{d}x)$ we denote by $p_S$ the \emph{marginal} of $p$ over $S$,
\[
  p_S(v_S)\;=\;\int_{\mathbb{R}^{|NS|}} p(v_S,z_{NS})\,\mathrm{d}z_{NS }.
\]
As running illustration, when $d{=}5$ and $S{=}\{1,2,4\}$ we have $NS=\{3,5\}$ and $p_S(v_1,v_2,v_4)=\int p(v_1,v_2,z_3,v_4,z_5)\,\mathrm{d}z_3\mathrm{d}z_5$.

\paragraph{Missing--data mechanism.}  We adopt the classical taxonomy of \cite{little2019statistical}.  A mask random variable $S\sim\mu$ is \emph{Missing Completely At Random (MCAR)} when it is independent of the data $X\sim p^{\star}$.  This article focuses on the MCAR regime, leaving MAR/MNAR extensions for future work.  Under MCAR the joint density factorises as
\[
  (X,S)\sim p^{\star}(x)\otimes\,m(S).
  \]

\paragraph{Learning goal.}  Given $N$ i.i.d. observations $(x_{i,S_i},S_i)$ drawn from the above process our aim is to reconstruct the \emph{full} data–generating density $p^{\star}$.  The information supplied by the data limits what can be recovered: at best we can demand
\begin{equation}\label{eq:marginal_condition}
  p_S \;=\; p^{\star}_S,\quad \text{for $\mu$--almost--every } S\subseteq[d].
\end{equation}

Nevertheless, equality of marginals is \emph{necessary} and can be turned into a tractable objective.  Define the \emph{population risk}
\begin{equation}\label{eq:ideal-risk}
  \mathcal{L}(p)\;:=\;\sum_{S\subseteq[d]} \mu(S)\,\KL\bigl(p^{\star}_S\,\|\,p_S\bigr),
\end{equation}
where $\KL$ is the Kullback–Leibler divergence.  Choice of KL is motivated by: (i) it is the unique Bregman divergence yielding the log–likelihood as empirical proxy~\citep{csiszar1975divergence}; (ii) it enjoys an information–theoretic interpretation; (iii) its convexity in the first argument gives powerful optimisation properties.  The next lemma relates \eqref{eq:marginal_condition} and \eqref{eq:ideal-risk}.

\begin{lemma}\label{lem:projection}
$\mathcal{L}(p)=0\;$ if and only if $p$ satisfies \eqref{eq:marginal_condition}.
\end{lemma}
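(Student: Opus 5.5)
The plan is to reduce the statement to the two elementary properties of the Kullback--Leibler divergence, nonnegativity and vanishing exactly at equality, and then handle the weighted sum over masks. Note first that $\mathcal{L}(p)$ in \eqref{eq:ideal-risk} is a finite sum over the $2^d$ subsets $S\subseteq[d]$, each term being $\mu(S)\,\KL(p^\star_S\|p_S)$. Here $\mu(S)\ge 0$ and $\KL(p^\star_S\|p_S)\in[0,+\infty]$. I will use the convention $0\cdot\infty=0$, so that masks with $\mu(S)=0$ contribute nothing even if the divergence is infinite. Because the set of masks is finite, ``$\mu$--almost--every $S$'' in \eqref{eq:marginal_condition} means precisely ``every $S$ with $\mu(S)>0$''. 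This identification is the first thing I will record.

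\emph{Step 1 (Gibbs' inequality).} For densities $q,r$ on $\mathbb{R}^{k}$ we have $\KL(q\|r)\ge 0$, with equality iff $q=r$ Lebesgue--a.e. I will prove this by Jensen's inequality applied to the strictly convex function $-\log$ under $q$. Equality in Jensen forces $r/q$ to be constant $q$--a.s. Since both $q$ and $r$ integrate to one, that constant is $1$, and then $r$ must also vanish off the support of $q$. Marginals of densities are again densities, by Fubini--Tonelli, so this applies to each pair $(p^\star_S,p_S)$. For $S=\emptyset$ both marginals are the trivial probability on a point, and the term is $0$; I will treat this as a degenerate case.

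\emph{Step 2 (the equivalence).} Suppose \eqref{eq:marginal_condition} holds. Then every $S$ with $\mu(S)>0$ has $\KL(p^\star_S\|p_S)=0$, and every $S$ with $\mu(S)=0$ contributes $0$, so $\mathcal{L}(p)=0$. Conversely, if $\mathcal{L}(p)=0$, then the sum of nonnegative terms vanishes, so each term vanishes. For $\mu(S)>0$ this forces $\KL(p^\star_S\|p_S)=0$, hence $p_S=p^\star_S$ a.e. by Step 1, which is \eqref{eq:marginal_condition}.

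I expect no real obstacle. The only delicate point is Step 1's equality case, together with being explicit that equality of marginals is understood a.e. and that the convention $0\cdot\infty=0$ is in force. I would state these conventions before giving the two-line argument.
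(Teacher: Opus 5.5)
Your proof is correct and follows the paper's route: nonnegativity of $\KL$ forces each term with $\mu(S)>0$ to vanish when the weighted sum is zero, and then $\KL(a\,\|\,b)=0 \Rightarrow a=b$ a.e.\ gives the marginal condition. Your version is more careful than the paper's one-line sketch, which credits the ``if'' direction to nonnegativity; you instead use $\KL(a\,\|\,a)=0$ for that direction, and you state the conventions for $\mu$-null masks and $0\cdot\infty=0$ explicitly.
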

\begin{proof}
Non--negativity of KL makes the ``if'' part immediate; the converse follows from the fact that $\KL(a\,\|\,b)=0$ implies $a=b$ almost everywhere.
\end{proof}

\paragraph{Empirical risk minimisation.}  Because $p^{\star}$ is unknown we replace \eqref{eq:ideal-risk} with its unbiased estimator
\begin{equation}\label{eq:empirical-risk}
  \widehat{\mathcal{L}}(p)\;:=\; -\frac{1}{N}\sum_{i=1}^{N} \log p_{S_i}(x_{i,S_i}) \;\;(+\text{const}).
\end{equation}
This is precisely the negative log–likelihood of the masked observations.  Minimising \eqref{eq:empirical-risk} therefore follows the standard statistical–learning recipe: pick a rich model class, add appropriate regularisation, and solve the resulting optimisation problem.

\paragraph{Why density–based imputation?}  Unlike regression–style imputers that output conditional means (minimising RMSE), our formulation targets the \emph{entire} data distribution via KL projection.  This enables: (i) single or multiple imputation by sampling from $p$; (ii) principled evaluation under distributional metrics such as energy distance and optimal transport; (iii) downstream tasks (e.g., likelihood-based semi–supervised learning) that rely on a coherent joint density.

\paragraph{Connection to Bayesian inference.}  Objective \eqref{eq:empirical-risk} coincides with the negative log–likelihood of the incomplete data.  Endowing the model parameters with a conjugate prior turns our estimator into a posterior mode; see the next section for details.

\section{Algorithm}\label{sec:algorithm}
We now describe our optimisation pipeline in full detail, drawing on the properties of \emph{positive semi–definite (PSD) kernel densities} that make the problem convex yet expressive.

\paragraph{PSD kernel density model.}  Fix a set of \emph{anchor points} $\mathcal{W}=\{w_1,\dots,w_\ell\}\subset\mathbb{R}^d$ that act as centres for the kernel basis.  Importantly, $\mathcal{W}$ is an \emph{external mesh}: the $w_j$ need not coincide with (possibly incomplete) training examples but can be chosen on a regular grid, drawn i.i.d.
from a simple proposal (e.g.
uniform on a bounding box) or learned by a separate procedure.  This decouples the representation from the missing--value mechanism.  Section~\ref{sec:impl} details the particular heuristic we employ in our experiments. Together with a Gaussian kernel $k_\eta(x,x')=\exp\bigl(-\eta\|x-x'\|^2\bigr)$ we define the feature map.  The associated feature map is $\phi(x)=(k_\eta(x,w_1),\dots,k_\eta(x,w_\ell))\in\mathbb{R}^{\ell}$.  For any PSD matrix $Q\in\mathcal{S}^{\ell}_{+}$ we obtain a non–negative function
\[p_Q(x)=\phi(x)^\top Q\phi(x).\]
Normalising via the moment matrix $H:=\int \phi(x)\phi(x)^\top\mathrm{d}x=c_{2\eta}(k_{\eta/2}(w_i,w_j))_{i,j=1,,.\ell}$ (closed form for Gaussian kernels) and the constraint $\Tr(QH)=1$ yields a valid density.  The resulting \emph{PSD kernel density family}, introduced  by~\cite{marteau2020non,rudi2021psd}, enjoys four key properties {\em at the same time}, compared to alternative approaches \citep{rudi2021psd}, motivating our choice:
\begin{enumerate}[label=(\roman*)]
  \item \textbf{Universality.}  As $\ell\to\infty$ and anchors grow dense, the span $\{p_Q:Q\succeq0\}$ is dense in $L_1$, see \cite{rudi2021psd}; we can approximate any continuous density.
  \item \label{eq:closed_form_marginals} \textbf{Closed–form marginals.}  For every subset $S$ one can marginalise analytically:
  \[
    p_{Q,S}(x_S)=c_{2\eta_{NS}}\,\Tr\!\bigl(Q\, (K_S(x_S)\circ H_{NS})\bigr),
  \]
  where $K_S(x_S)=\phi_S(x_S)\phi_S(x_S)^ T,$ with $\phi_S(x_S)=(k_{\eta_S}(x_S,w_{1,S}),\dots,k_{\eta_S}(x_S,w_{\ell,S}))$ is a rank–one kernel matrix and 
  $H_{NS}=\int_{(-1,1)^{|NS|}}\phi_{NS}(x_{NS})\phi_{NS}(x_{NS})dx_{NS}$
  depends only on $NS$ and the anchor points $\mathcal{W}$. 
  For the Gaussian kernel the matrix $H_{NS}$ is explicitly computable, see \cite{rudi2021psd}.
  \item \textbf{Convex likelihood.}  After normalisation, $Q\mapsto-\log p_Q(x)$ is convex; the ERM objective is therefore convex \emph{in the parameters}.  Global optima are reachable without stochastic tricks.
  \item \textbf{Optimal approximation of smooth densities.}  For H\"older-smooth targets of order $s$ the family attains the minimax $\mathcal{O}(N^{-s/(2s+d)})$ KL rate with only $\ell\approx N^{d/(2s+d)}$ anchors~\citep{rudi2021psd}, breaking the classical $\varepsilon^{-d/2}$ barrier of non-negative kernel histograms.
\end{enumerate}


\paragraph{Empirical risk over PSD models.}
\begin{theorem}[Convex ERM for PSD densities]\label{thm:erm-psd}
Let $\{(x_i,S_i)\}_{i=1}^N$ be incomplete observations with masks $S_i\subset[d]$.  
Define
\[
  A_i := K_{S_i}(x_{i,S_i}) \circ H_{NS_i},
\]
where $K_{S_i}(x_{i,S_i})$ and $H_{NS_i}$ are defined in \ref{eq:closed_form_marginals}.
Then the empirical risk minimiser within the PSD-density family $\{p_Q:Q\succeq0,\,\Tr(QH)=1\}$ is obtained by solving
\[
  \widehat Q_N = \argmin_{Q\succeq0,\,\Tr(QH)=1}\;\bigl[-\tfrac1N\sum_{i=1}^N \log\Tr(QA_i)\bigr].
\]
Moreover, the map $Q\mapsto-\log\Tr(QA_i)$ is convex, hence the optimisation is convex and global optima are reachable with deterministic solvers.
\end{theorem}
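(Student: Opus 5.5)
The proof has two parts: identifying the empirical objective with a linear functional of $Q$, and then checking convexity of the resulting problem.

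\emph{Step 1: the masked likelihood is linear in $Q$.} Fix an observation $(x_i,S_i)$. By the closed-form marginals property \ref{eq:closed_form_marginals},
\[
p_{Q,S_i}(x_{i,S_i}) = c_{2\eta_{NS_i}}\,\Tr\bigl(Q\,(K_{S_i}(x_{i,S_i})\circ H_{NS_i})\bigr)= c_{2\eta_{NS_i}}\Tr(QA_i).
\]
To justify this directly, write $\phi(x)_j=k_\eta(x,w_j)$. Because the Gaussian kernel factorises across coordinates, $\phi(x)_j=\phi_S(x_S)_j\,\phi_{NS}(x_{NS})_j$. Then
\[
\phi(x)^\top Q\phi(x)=\sum_{j,k}Q_{jk}\,[\phi_S\phi_S^\top]_{jk}(x_S)\,[\phi_{NS}\phi_{NS}^\top]_{jk}(x_{NS}).
\]
Integrating over $x_{NS}$ and exchanging sum and integral (the sum is finite) gives the Hadamard product $K_S\circ H_{NS}$, up to the normalising constant. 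Plugging this into the empirical risk \eqref{eq:empirical-risk} gives
\[
-\tfrac1N\sum_i\log p_{Q,S_i}(x_{i,S_i}) = -\tfrac1N\sum_i\log\Tr(QA_i) - \tfrac1N\sum_i\log c_{2\eta_{NS_i}}.
\]
The second term does not depend on $Q$, so it drops out of the argmin. The PSD-density family is exactly $\{p_Q: Q\succeq0,\ \Tr(QH)=1\}$: the constraint $\Tr(QH)=\int p_Q=1$ enforces normalisation, and $Q\succeq0$ enforces nonnegativity. Hence the ERM over the family equals $\widehat Q_N$ as stated.

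\emph{Step 2: convexity.} The map $Q\mapsto\Tr(QA_i)$ is linear. Both $K_{S_i}$ and $H_{NS_i}$ are PSD: the first is rank one, and the second is a Gram integral $\int\phi\phi^\top$. By the Schur product theorem, $A_i\succeq0$, so $\Tr(QA_i)\ge0$ on the feasible set. The function $-\log$ is convex and nonincreasing on $(0,\infty)$, extended by $+\infty$ at $0$. Composing a convex function with an affine map preserves convexity, so $Q\mapsto-\log\Tr(QA_i)$ is convex, and an average of convex functions is convex.

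The feasible set is the intersection of the PSD cone, which is convex, with an affine hyperplane. It is therefore convex, and the problem is a convex program with an extended-value objective.

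\emph{Expected obstacle: attainment of the minimum.} The step needing most care is showing the argmin is attained and the objective is not identically $+\infty$. For properness, each $A_i$ is nonzero: $K_{S_i}$ has strictly positive Gaussian entries, and $H_{NS_i}$ has positive diagonal. Taking $Q=I/\Tr(H)$ is feasible and gives $\Tr(QA_i)>0$, so the objective is finite somewhere.

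For compactness, $H$ is positive definite when the anchors are distinct, since the Gaussian kernel is strictly positive definite. Then $\Tr(QH)\ge\lambda_{\min}(H)\Tr Q$, so the feasible set is bounded, and it is closed. The objective is lower semicontinuous, so a minimiser exists by Weierstrass.

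The final claim, that global optima are reachable, then follows from the standard fact that every local minimiser of a convex problem is global. Interior-point (Newton) methods apply because $-\log\det Q$ is a self-concordant barrier for the PSD cone and the objective's terms are log-linear.
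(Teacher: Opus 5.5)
Your proof is correct and follows the paper's own sketch. In both, $A_i\succeq0$ makes $Q\mapsto-\log\Tr(QA_i)$ a convex function of an affine map over the convex set $\{Q\succeq0,\ \Tr(QH)=1\}$, and the objective is the masked negative log-likelihood up to the $Q$-independent constants $c_{2\eta_{NS_i}}$. You also supply details the paper leaves out, all of them sound: the Gaussian factorisation behind the closed-form marginal, the Schur product argument for $A_i\succeq0$, and attainment of the minimum via boundedness of the feasible set when $H\succ0$ (distinct anchors).
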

\begin{proof}[Sketch]
Because $A_i\succeq0$, the function $Q\mapsto-\log\Tr(QA_i)$ is convex; the feasible set is convex as well.  The objective equals the negative log-likelihood of the observed marginals, making the solution the empirical-risk minimiser.  Full details are deferred to Appendix~\ref{appx:comput}.
\end{proof}

\paragraph{Regularised maximum likelihood.}  To minimise the empirical risk \eqref{eq:empirical-risk} we solve the regulirized risk
\begin{tcolorbox}[title=Convex objective under MCAR]
\vspace{-1.5ex}
\begin{equation}\label{eq:primal}
  \min_{Q\succeq0}\; f(Q):=-\frac1N\sum_{i=1}^N\log\Tr(QA_i+\alpha)+\lambda\Tr(Q A_0) \qquad\text{s.t. }\Tr(QH)=1.
\end{equation}
\vspace{-1.5ex}
\end{tcolorbox}
The matrices $A_i$ and and $H$ are pre–computable once anchors are fixed. The trace term controls RKHS norm (preventing over–fitting) while the log–det acts as a barrier and, Bayesianly, as a Wishart prior density~\citep{olkin1964multivariate}. The $\alpha-$term makes the loss function upper bounded.

\paragraph{MAR.} Even though the paper is framed in the MCAR setting, it is well known that the Maximum Likelihood approach works in the MAR setting, see \cite[Chapter 6.2]{little2019statistical}.

\paragraph{Interior-point Newton solver.}  To optimize our functional we adopt a classical damped Newton method inside an outer barrier loop of the form $-\mu \log\det(Q)$ for a sequence of decreasing $\mu$ as in \cite{nesterov1994interior, boyd2004convex}.

\noindent{\bf Algorithmic pipeline (high level).}\\
(1) \textbf{Anchor selection} – choose a representative anchor set $\mathcal{W}$ (details in Section~\ref{sec:impl}). \\
(2) \textbf{Convex optimisation} – solve problem~\eqref{eq:primal} with an interior-point Newton solver. \\
(3) \textbf{Imputation} – evaluate conditional means or sample from $p_{\widehat Q}$ to produce single or multiple imputations.

\section{Theoretical Guarantees}\label{sec:theory}

Missing--completely--at--random (MCAR) settings impose a stark information constraint: for any incomplete record we observe only the coordinates selected by a random mask $S\sim\mu$.  Consequently the \emph{only} aspect of the true density $p^{\star}$ accessible to us is the family of observable marginals $\{p^{\star}_S\}_{S\sim\mu}$.  A principled estimator should therefore aim to \emph{exactly} reproduce those marginals and measure fidelity through the masked--Kullback--Leibler risk $\mathcal L(p)$ defined in~\cref{eq:ideal-risk}---the strongest loss we can hope to minimise under MCAR.  

We prove that the estimator $\hat p=p_{\hat Q_N}$ learns the observable marginals at a minimax--optimal rate and converges to the true density.  Full proofs appear in Appendix~\ref{appx:theory}.

Having introduced the estimator, we now establish when it is provably reliable.

\bigskip

\noindent\textbf{Proof roadmap.} We first define the masked risk and state two mild assumptions, then bound separately the \emph{learning} and \emph{approximation} errors before stitching them into a joint-KL consistency theorem.

\begin{figure}[t]
  \centering
  \includegraphics[trim=20cm 0cm 0cm 0cm, clip, width=0.30\linewidth]{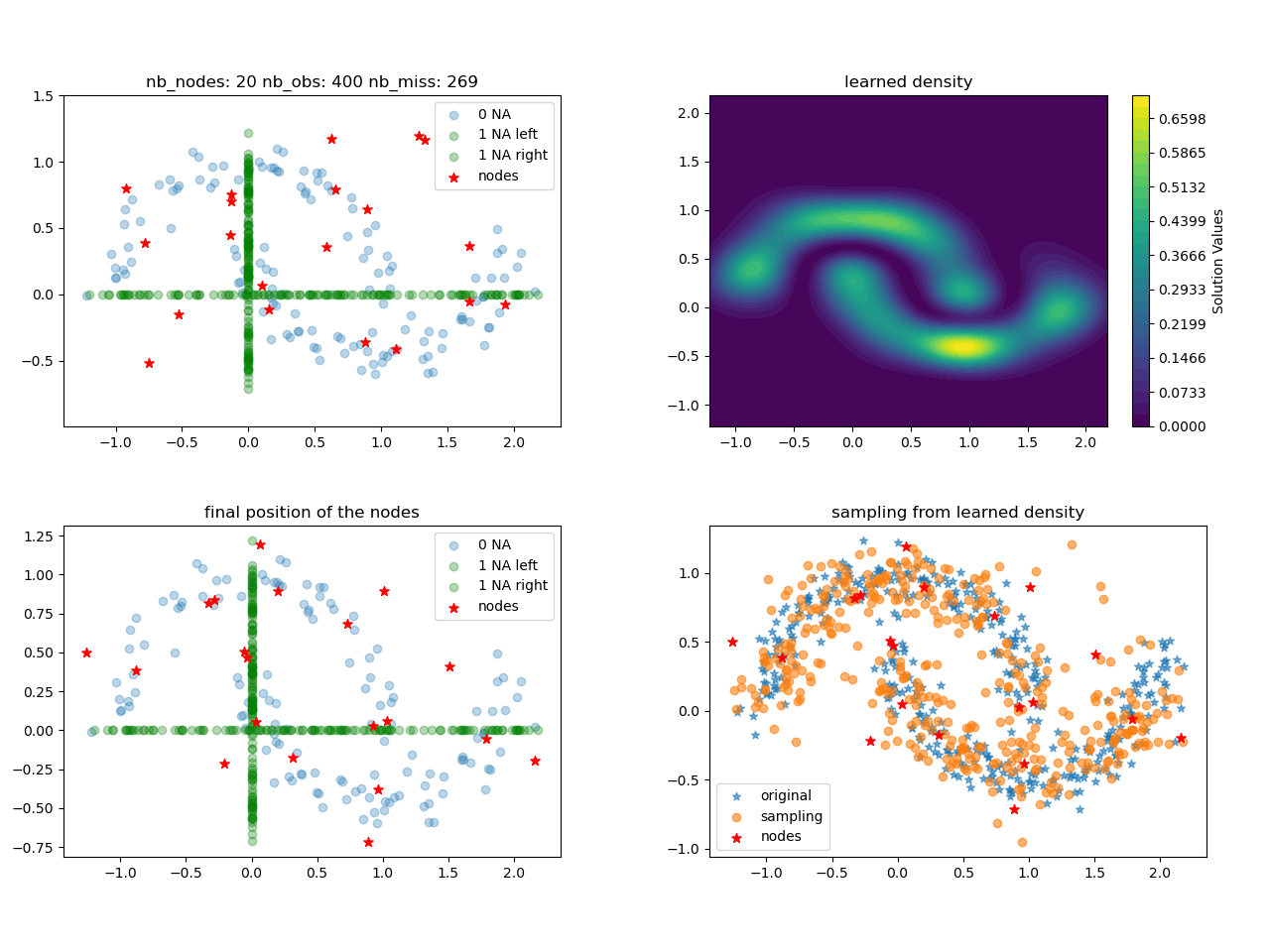}%
  \includegraphics[trim=18cm 0cm 0cm 0cm, clip, width=0.24\linewidth]{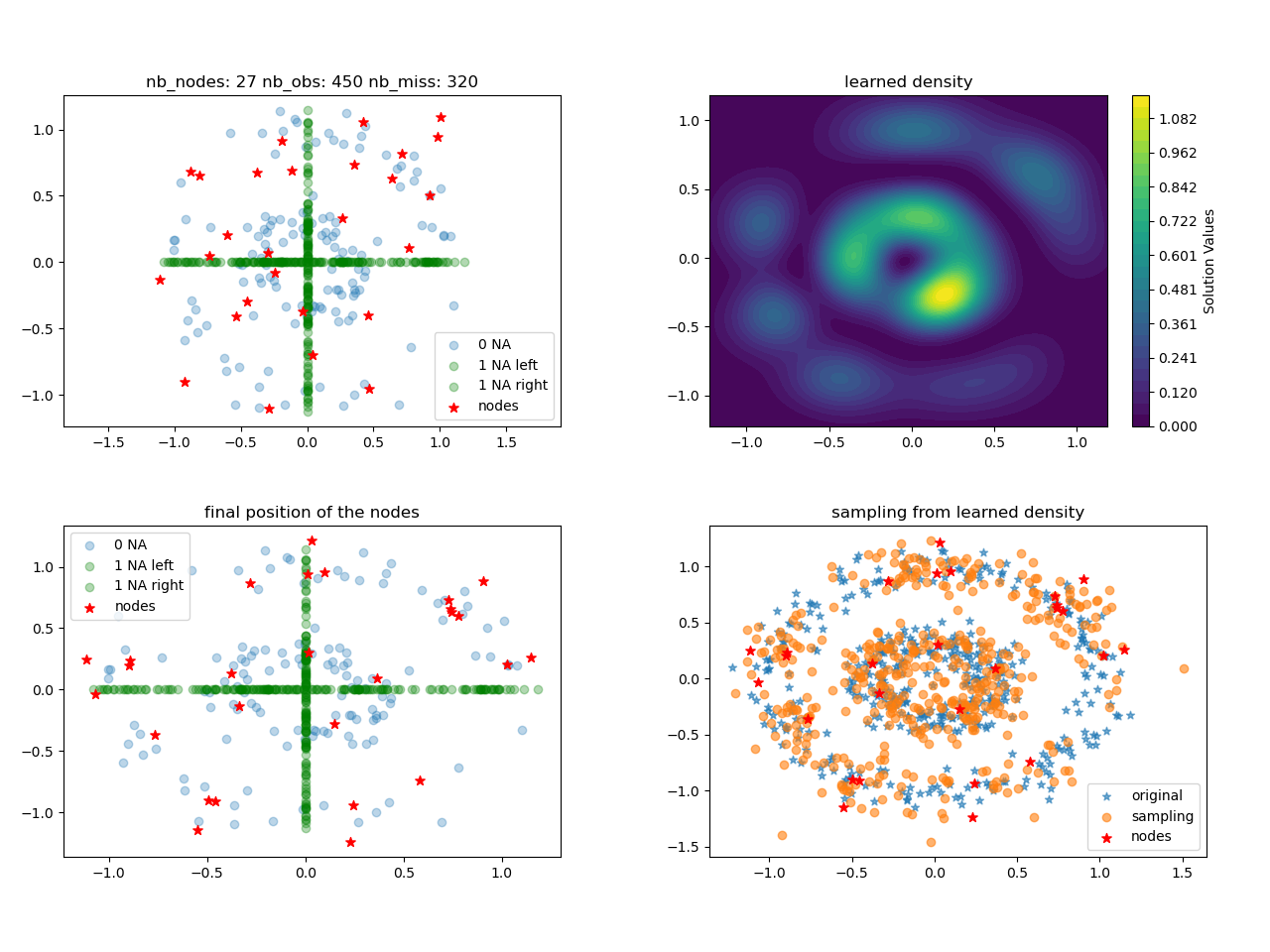}
  \caption{Qualitative results on toy datasets. Green: input points (missing coordinates projected), anchor positions, learned density and samples. {\bf Full experiments in \cref{sec:experiments}.} }
  \label{fig:toy_main}
\end{figure}

\subsection{Setup and risk}
Let $p^{\star}$ be the data--generating density and $\mu$ the distribution of masks $S\subset[d]$.  For any density $p$ define the masked Kullback--Leibler risk as in \cref{eq:ideal-risk}. The empirical minimiser $\hat p$ arises from~\eqref{eq:primal}.

\paragraph{Assumptions.}\label{ass:smooth-ident}
(i) (Smoothness) $p^{\star}$ is H\"older--$s$ continuous on a compact domain contained in $D := [-1,1]^{d}$ with $\|p^{\star}\|_{\infty}<\infty$, with $s > d/2$. Either the $p^\star$ is bounded away from zero, or the set of zeros is a $C^1$ manifold.\\
(ii) (Mesh-points) The points of the mesh $\cal W$ are chose uniformly $i.i.d.$ on the $d-$dimensional hypercube $D$.

Assumption (i) comes directly from \cite{rudi2021psd} (and generalized in \cite{marteau2024second}) and guarantees that $p^\star$ can be efficiently approximated by PSD models.
 
\subsection{Risk decomposition}\label{sec:risk_dec}
Fix $\lambda, \nu > 0$. Denote by $\hat{p}$ be the probability distribution that is the solution of our algorithm, i.e. the PSD model defined as
\begin{equation}\label{eq:phat}
    \hat{p}(x) := (1-\nu) p_ {\hat{Q}}(x) ~+~ \nu u(x), 
\end{equation}
where $\hat{Q}$ is the solution of the problem in \cref{eq:primal}, for the given $\lambda, \eta, \nu$, and $u(x) = 1(x)/V_D$ is the uniform probability on $\Omega = [-1,1]^d$ with $V_D = \int_\Omega dx$. Moreover, let $\tilde{p}$ be the reference best PSD model in the population case, i.e. 
$$\tilde{p}(x) = (1-\nu) p_{\tilde{Q}}(x) + \nu u(x),$$
where $\tilde{Q} \in \R^{\ell \times \ell}$ satisfies
\[
  \tilde{Q} = \argmin_{Q\succeq0,\,\Tr(QH)=1}\; \tilde{\cal L}(Q) := {\cal L}((1-\nu)p_Q + \nu u) + \lambda \operatorname{tr} (Q A_0).
\]
We split the excess risk of $\hat p$ into a statistical and an approximation component:
\begin{equation}\label{eq:decomp}
  \tilde{\cal L}(\hat{Q} ) ~~=~~ \underbrace{\bigl[
  \tilde{\cal L}(\hat{Q}) -\tilde{\cal L}(\tilde{Q})\bigr]}_{\text{learning error}}+
  \underbrace{\tilde{\cal L}(\tilde Q)}_{\text{approximation error}}.
\end{equation}
The first term is the price of estimating $\hat{Q}$ from $N$ samples; the second measures how well the PSD class on the mesh points ${\cal W}$ can approximate $p^{\star}$.

\subsection{A good candidate}
The theory of PSD models \cite{rudi2021psd} tells us under assumption (i) it is possible to efficiently approximate a probability with a small mesh $\cal W$, i.e. that there exists a good candidate $\bar{Q} \in \R^{\ell \times \ell}, \bar{Q} \succeq 0$, such that $p_{\bar{Q}}$ approximates very effectively $p^*$ and with a small Frobenius norm for $\bar{Q}$. The following theorem makes this statement precise.
\begin{theorem}\label{thm:exists-barQ}
Let $\epsilon \in (0, 1/e]$ and $\delta \in (0,1)$. Let $p^*$ satisfy assumption (i), and let $\cal W$ satisfy assumption (iii). When
$$
\ell \geq C_1  \epsilon^{-d/s} (\log(C_1/\epsilon\delta))^{d/2}, \quad \eta = C_2 \epsilon^{-2/s}/\log(2/\epsilon),
$$
then there exists $\bar{Q} \in \R^{\ell \times \ell}$ and $\bar{Q} \succeq 0$, such that the following holds with probability $1-\delta$
$$
\|p_{\bar{Q}} - p^*\|_{L^2(\Omega)} \leq \epsilon, \qquad \operatorname{tr}(\bar{Q} K) \leq C_3\ell.
$$
Here the constants $C_1, C_2, C_3$ depends only on $s, d, \|p^*\|_{W^s_2(\Omega)}$.
\end{theorem}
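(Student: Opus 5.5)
The plan follows the approximation argument of \cite{rudi2021psd} for Gaussian PSD models, adapted to i.i.d.\ uniform anchors on $D$ (the mesh of Assumption (ii), which the statement calls (iii)). The proof has three steps: build an infinite-dimensional PSD model that approximates $p^\star$, project it onto the span of the anchors, and control the resulting error and the trace.

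\textbf{Step 1 (infinite-dimensional model).} Using Assumption (i), write $p^\star=\sum_{j} f_j^2$ with finitely many Hölder-$s$ functions $f_j$. When $p^\star$ is bounded away from zero, take the single function $f=\sqrt{p^\star}$. When the zero set is a $C^1$ manifold, use the partition-of-unity construction of \cite{rudi2021psd}, as generalised in \cite{marteau2024second}.

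Next, mollify each $f_j$ with a kernel adapted to $k_\eta$ at scale $\eta^{-1/2}$, giving $g_j\in\mathcal H_\eta$, the Gaussian RKHS, with
$$\|f_j-g_j\|_{L^\infty(\Omega)}\lesssim \eta^{-s/2}\qquad\text{and}\qquad \|g_j\|_{\mathcal H_\eta}^2\lesssim \eta^{d/2}.$$
The operator $M=\sum_j g_j\otimes g_j\succeq 0$ then satisfies $\|\phi^\top M\phi-p^\star\|_{L^2}\lesssim \eta^{-s/2}$. The choice $\eta=C_2\epsilon^{-2/s}/\log(2/\epsilon)$ makes this at most $\epsilon/2$; the log factor is what absorbs the polylog terms in the next step.

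\textbf{Step 2 (compression onto the anchors).} Let $P$ be the orthogonal projection in $\mathcal H_\eta$ onto $\operatorname{span}\{k_\eta(\cdot,w_i)\}_{i=1}^{\ell}$, and define $\bar Q$ through $PMP=\sum_{a,b}\bar Q_{ab}\,k_\eta(\cdot,w_a)\otimes k_\eta(\cdot,w_b)$. Explicitly, $\bar Q=K^{-1}\Phi^\top M\Phi K^{-1}$, where $K=(k_\eta(w_a,w_b))_{a,b}$; this is PSD because $M$ is. Its trace then satisfies
$$\operatorname{tr}(\bar QK)=\operatorname{tr}(PMP)\le \operatorname{tr}(M)=\sum_j\|g_j\|^2_{\mathcal H_\eta}.$$
To reach the claimed $\le C_3\ell$, I would use that $\eta^{d/2}$ is of order $\ell$ up to logs under the stated scaling, since $\epsilon^{-d/s}\sim\eta^{d/2}$. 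The logarithmic factor should be harmless, possibly at the cost of slightly enlarging $C_1$.

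\textbf{Step 3 (error of the compression; the main obstacle).} Pointwise,
$$|\phi(x)^\top\bar Q\phi(x)-p_M(x)|\le 2\|(I-P)k_\eta(\cdot,x)\|_{\mathcal H_\eta}\,\|M\|_{\mathrm{tr}}^{1/2}\,\max_j\|g_j\|_{\mathcal H_\eta}.$$
So the task is to bound the fill-distance quantity $\sup_{x\in\Omega}\|(I-P)k_\eta(\cdot,x)\|_{\mathcal H_\eta}$ by an amount exponentially small in $\eta h^2$, where $h$ is the fill distance of $\mathcal W$. This is a Gaussian-kernel sampling inequality, used in \cite{rudi2021psd} (Lemma on $P$, via the Rieger--Zwicknagl bounds).

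For uniform i.i.d.\ anchors, a covering argument shows that with probability $1-\delta$ the fill distance is $h\lesssim(\log(\ell/\delta)/\ell)^{1/d}$. The condition $\ell\ge C_1\epsilon^{-d/s}(\log(C_1/\epsilon\delta))^{d/2}$ then makes $h\sqrt\eta\,\log(1/h)$ small enough that the exponential term defeats the polynomial prefactor $\eta^{d/2}$. This gives a compression error of at most $\epsilon/2$ in $L^\infty$, and hence in $L^2(\Omega)$ since $\Omega$ is bounded.

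Steps 1 and 3 together give the $\epsilon$ bound on $\|p_{\bar Q}-p^\star\|_{L^2(\Omega)}$, and Step 2 gives the trace bound. The delicate point is tracking the exponents and logarithms in Step 3 so that the stated $\ell$ suffices.
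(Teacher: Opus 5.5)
Your architecture matches the paper's. The paper bounds the fill distance of the i.i.d.\ anchors with Lemma~4 of \cite{rudi2020finding}. It takes the operator $M_\alpha$ from Thm~D.4 of \cite{rudi2021psd} and compresses it as $\bar Q=K_\eta^{-1}B_\alpha K_\eta^{-1}$, with $(B_\alpha)_{ij}=\langle k_{w_i},M_\alpha k_{w_j}\rangle$. It bounds $\tr(\bar QK_\eta)\le\tr(M_\alpha)$ because $Z^*(ZZ^*)^{-1}Z$ is a projection, and it controls the compression error with the Gaussian sampling inequality (Thm~C.4 there). Your Steps~2 and~3 reproduce this. The genuine gap is in Step~1, where you replace Thm~D.4 by a mollification at the kernel scale $\eta^{-1/2}$.

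\textbf{The gap in Step~1.} With that coupling the error is $\eta^{-s/2}$. For the prescribed $\eta=C_2\epsilon^{-2/s}/\log(2/\epsilon)$ this equals $C_2^{-s/2}\,\epsilon\,\log^{s/2}(2/\epsilon)$, which exceeds $\epsilon/2$ for small $\epsilon$ whatever $C_2$ is. Dividing $\eta$ by the logarithm makes the approximation error \emph{larger}, so it cannot ``absorb'' anything.

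\textbf{The missing idea.} You need to decouple the band-limit of the approximant from the kernel width. Band-limit each $f_j$ at frequency $R\asymp\epsilon^{-1/s}$, i.e.\ mollify at the finer scale $\epsilon^{1/s}\asymp(\eta\log(2/\epsilon))^{-1/2}$. Then the approximation error is $O(\epsilon)$ independently of $\eta$. Bound the Gaussian RKHS norm by splitting frequencies at $\sqrt\eta$:
\[
\|g_j\|^2_{\mathcal H_\eta}\lesssim \eta^{d/2}\Bigl(\|f_j\|^2_{L^2}+\eta^{-s}e^{R^2/(4\eta)}\|f_j\|^2_{W^s_2}\Bigr).
\]
With the prescribed $\eta$ and $C_2$ large enough, $e^{R^2/(4\eta)}\le 2/\epsilon$ while $\eta^{-s}\lesssim\epsilon^2\log^s(2/\epsilon)$. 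Their product is bounded, so $\tr(M)\lesssim\eta^{d/2}\lesssim\epsilon^{-d/s}$.

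This is exactly the bound $\tr(M_\alpha)\le C\tau^{d/2}\bigl(1+\alpha^2e^{C'\alpha^{-2/s}/\tau}\bigr)$ of Thm~D.4 that the paper uses. The factor $1/\log(2/\epsilon)$ in $\eta$ is the largest shrinkage of $\eta$ that keeps the RKHS norm polynomial. Its purpose is to reduce the number of anchors needed in Step~3, not to absorb logarithms. With this replacement, your Step~2 trace bound and the comparison $\eta^{d/2}\lesssim\epsilon^{-d/s}\lesssim\ell$ go through unchanged.

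\textbf{Two remarks on Step~3.} First, the power function $\sup_x\|(I-P)k_\eta(\cdot,x)\|$ decays like $\exp(-c\,t\log t)$ with $t\asymp(h\sqrt\eta)^{-1}$, not ``exponentially in $\eta h^2$''. Your later sentence uses the right quantity.

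Second, and more importantly, do the bookkeeping you deferred. Combine $h\lesssim(\log(\ell/\delta)/\ell)^{1/d}$ with the requirement $h\lesssim(\sqrt\eta\,\log(2/\epsilon))^{-1}$, which is what the paper's deterministic theorem needs. This gives $\ell/\log(\ell/\delta)\gtrsim\epsilon^{-d/s}\log^{d/2}(2/\epsilon)$. That is roughly one logarithmic factor more than the stated condition on $\ell$, for fixed $\delta$ and small $\epsilon$. Exploiting the $t\log t$ in the exponent saves only $\log\log$ factors. The paper's proof asserts this step without computation. You should expect to need an extra factor $\log(C_1/(\epsilon\delta))$ in $\ell$, rather than to close the argument with the condition as stated.
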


See Appendix \ref{appx:proof-exists_barQ} for the full proof. Given this existence theorem, we now proceed to analyze the approximation error.

\subsection{Approximation error}
The existence of $\bar{Q}$ allows to bound $\tilde{Q}$, since $\bar{Q}$ belongs to the same hypothesis space and by construction $\tilde{\cal L}(\tilde{Q}) \leq \tilde{\cal L}(\bar{Q})$. We use moreover the fact, that, by the information monotonicity property of KL with respect to marginalization (see e.g. Thm 2.5.3 in \cite{cover1999elements}), we have that
$$
{\cal L}(p) = \int KL(p^*_S | p_S) d\mu(S) \leq KL(p^*|p),
$$
for any probability density $p$ on $D$,
and the upper bound of KL with total variation due to the reverse Pinsker inequality, since $p^*$ is bounded from above and the fact that $(1-\nu)p_{\bar{Q}} + \nu u$ is bounded from below. The following theorem combines these ideas.
\begin{theorem}\label{thm:approx-error-tildeQ}
Let $\epsilon \in (0, 1/e], \delta \in (0,1)$. Under the assumption of \cref{thm:exists-barQ} on $p^*$, ${\cal W}$ and under the same choice dependent on $\epsilon,\delta$ for $\ell, \eta$, the following holds with probability $1-\delta$
$$
\tilde{\cal L}(\tilde{Q}) ~~\leq~~ C_1\log(C_2/\nu)(\epsilon + \nu) ~+~  C_3 \ell \lambda.
$$
\end{theorem}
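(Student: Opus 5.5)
We bound $\tilde{\cal L}(\tilde Q)$ by comparison with the candidate of \cref{thm:exists-barQ}. Let $\bar Q$ be the matrix given there, on the event of probability $1-\delta$. Since $\bar Q\succeq 0$ need not satisfy $\Tr(\bar QH)=1$, we first renormalise: set $\bar Q' = \bar Q/\Tr(\bar Q H)$. Note that $\Tr(\bar QH)=\int p_{\bar Q}$. By Cauchy--Schwarz on the bounded domain,
\[
\Bigl|\int_\Omega p_{\bar Q}-1\Bigr|\le V_D^{1/2}\,\|p_{\bar Q}-p^\star\|_{L^2(\Omega)}\le V_D^{1/2}\epsilon ,
\]
so the normalising constant lies in $[1-C\epsilon,1+C\epsilon]$ and $\|p_{\bar Q'}-p^\star\|_{L^2}\le C\epsilon$. (If the integrals are over $\R^d$ rather than $\Omega$, the Gaussian tails outside $\Omega$ must also be controlled; I would absorb this into the choice of $\eta$.) Then $\bar Q'$ is feasible, and by optimality of $\tilde Q$,
\[
\tilde{\cal L}(\tilde Q)\le \tilde{\cal L}(\bar Q') = {\cal L}(\bar p)+\lambda\Tr(\bar Q'A_0),\qquad \bar p:=(1-\nu)p_{\bar Q'}+\nu u .
\]
For the regulariser I interpret $A_0=K$, so that \cref{thm:exists-barQ} gives $\lambda\Tr(\bar Q'A_0)\le 2C_3\ell\lambda$ for $\epsilon$ small. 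This produces the $C_3\ell\lambda$ term.

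For the KL term, the data-processing (marginalisation) inequality quoted before the statement gives ${\cal L}(\bar p)\le \KL(p^\star\|\bar p)$. Since $\bar p\ge \nu/V_D$ on $\Omega$ and $p^\star\le\|p^\star\|_\infty$, the ratio $p^\star/\bar p$ is bounded by $M:=\|p^\star\|_\infty V_D/\nu$. I would then use the elementary reverse-Pinsker bound: for densities with $\sup p^\star/\bar p\le M$,
\[
\KL(p^\star\|\bar p)\le C\log(M)\,\|p^\star-\bar p\|_{L^1},
\]
which can be obtained, for instance, via $\KL\le \int (p^\star-\bar p)\log(p^\star/\bar p)$, splitting the regions $p^\star\ge\bar p$ and $p^\star<\bar p$ and using $\log(p^\star/\bar p)\le\log M$ on the first region. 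This yields the factor $\log(C_2/\nu)$.

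It remains to bound the $L^1$ distance:
\[
\|p^\star-\bar p\|_{L^1(\Omega)}\le(1-\nu)\|p^\star-p_{\bar Q'}\|_{L^1}+\nu\|p^\star-u\|_{L^1}\le V_D^{1/2}C\epsilon+2\nu .
\]
Here Cauchy--Schwarz converts the $L^2$ bound into an $L^1$ bound. Combining the three estimates gives $C_1\log(C_2/\nu)(\epsilon+\nu)+C_3\ell\lambda$ after renaming constants. The probability $1-\delta$ is inherited from \cref{thm:exists-barQ}.

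The main obstacle is the reverse-Pinsker step. On the region where $p^\star<\bar p$, $\log(p^\star/\bar p)$ is unbounded below as $p^\star\to 0$. However, the integrand $p^\star\log(p^\star/\bar p)$ there is bounded above by $0$, so that region contributes nonpositively. The cleanest route is to write
\[
\KL=\int \bigl(p^\star\log(p^\star/\bar p)-p^\star+\bar p\bigr)
\]
and use pointwise $t\log t-t+1\le(\log M)\,|t-1|$ for $t\in[0,M]$, $M\ge e$. This check is where I would be careful. A secondary subtlety is the normalisation of $\bar Q$ and possible mass of $p_{\bar Q}$ outside $\Omega$.
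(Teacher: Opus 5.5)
Your proposal is correct and follows the paper's proof: compare with the $\bar Q$ of Theorem~\ref{thm:exists-barQ}, bound ${\cal L}$ by the joint $\KL$ via marginalisation, apply a reverse Pinsker bound using $\bar p\ge\nu/V_D$, and finish with $(1-\nu)V_D^{1/2}\epsilon+2\nu$; your pointwise inequality $t\log t-t+1\le(\log M)|t-1|$ on $[0,M]$, $M\ge e$, is a valid self-contained substitute for the paper's citation of Binette. Your renormalisation $\bar Q'=\bar Q/\Tr(\bar QH)$ repairs a step the paper skips, since the paper writes $\tilde{\cal L}(\tilde Q)\le\tilde{\cal L}(\bar Q)$ although $\bar Q$ need not satisfy $\Tr(\bar QH)=1$. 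One loose end remains: $1-V_D^{1/2}\epsilon$ can be nonpositive for $\epsilon$ near $1/e$ once $d\ge 3$, so treat $\epsilon$ bounded below separately, e.g.\ with any feasible $Q$ having $\Tr(QK_\eta)=O(\ell)$ (a rescaled $e_1e_1^\top$ works) and the crude bound ${\cal L}\le\log(C_2/\nu)$ absorbed into $C_1$.
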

See Appendix \ref{appx:proof-approx-error-tildeQ} for the full proof. Having tackled the approximation error, we are now ready to address the learning error.
\subsection{Learning error}

The following Theorem states the main generalization bound. The first key ingredient is a bound on the trace of the ERM solution $\hat Q$ in terms of the optimal solution in the population case, i.e $\tilde Q.$ The second key ingredient is a non-trivial bound on the hypothesis class of the problem.
\begin{theorem}[Generalisation bound]\label{thm:learn}
Under Assumption~\ref{ass:smooth-ident}, for any $\lambda > 0, \nu \in (0, 1/e], \delta \in (0,1)$ and $\ell, \eta$ chosen as in \cref{thm:approx-error-tildeQ} with $\epsilon = \eta$, for any 
\[
 \tilde{\cal L}(\hat Q) - \tilde{\cal L}(\tilde Q)
  \;\le\; \frac{C}{\nu} \left(\ell + \lambda + \frac{1}{\lambda}\right) \log^{3/2}\left(\frac{\ell}{\lambda\nu}\right) \sqrt{\frac{\log(16/\delta)}{N}},
\]
with $C$ depending on $s,d,\|p^*\|_{W^s_2(D)}$ only.
\end{theorem}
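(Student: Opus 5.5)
Our plan is the standard regularised-ERM argument, in three steps: a trace bound on the estimator, a uniform deviation bound over a trace ball, and a basic-inequality combination. Write $\widehat{\cal L}_\nu(Q) := -\frac1N\sum_i \log\big((1-\nu)\Tr(QA_i)+\nu u_{S_i}\big) + \lambda\Tr(QA_0)$ for the empirical counterpart of $\tilde{\cal L}$, up to the constant entropy term of $p^\star_S$. We read the $\alpha$ in \eqref{eq:primal} as $\nu u_S$, suitably rescaled.

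\textbf{Step 1 (trace bound on $\hat Q$ and $\tilde Q$).} The mixture with $\nu u$ lower bounds every marginal density by $\nu/V_D$. Hence the negative log-likelihood term is at least $-\log\big(\|p_Q\|_\infty+1\big)$ and at most $\log(V_D/\nu)$. Since $\hat Q$ minimises $\widehat{\cal L}_\nu$, comparing with the feasible point $Q_0 = H^{-1}/\ell$ (or any fixed feasible point with trace $O(\ell)$) gives
\[
\lambda\Tr(\hat Q A_0)\le \log(V_D/\nu) + \lambda\Tr(Q_0A_0) + \log\big(1+\sup_x\phi(x)^\top\hat Q\phi(x)\big).
\]
We will take $A_0$ to be the kernel Gram matrix $K$ (or dominated by it). Then $\phi(x)^\top Q\phi(x)\le \Tr(QK)\,\|K^{-1/2}\phi(x)\|^2$, and the last factor is bounded by $\ell$ up to constants. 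This gives $\Tr(\hat QA_0)\le R:=C\big(\ell+\lambda^{-1}\log(\ell/(\lambda\nu))\big)$, and the same bound holds for $\tilde Q$. This step produces the $\ell+\lambda+1/\lambda$ prefactor.

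\textbf{Step 2 (uniform deviation over the ball $\mathcal Q_R=\{Q\succeq0,\Tr(QA_0)\le R\}$).} Define the loss $g_Q(x,S)=-\log\big((1-\nu)\Tr(QA(x,S))+\nu u_S\big)$. It is bounded by $\log(C\ell R/\nu)$. It is also $(V_D/\nu)$-Lipschitz with respect to the linear functional $Q\mapsto\Tr(QA(x,S))$, because the argument of the log is at least $\nu/V_D$. Apply McDiarmid's inequality with symmetrisation and Talagrand's contraction, which is where the $1/\nu$ factor enters. The Rademacher complexity of the linear class $\{\Tr(QA)\}$ over $\mathcal Q_R$ is at most $R\,\mathbb E\|\frac1N\sum\sigma_i A_0^{-1/2}A_iA_0^{-1/2}\|_{op}$. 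A matrix Bernstein bound makes this $\lesssim R\sqrt{\log \ell/N}$, using $\|A_0^{-1/2}A_iA_0^{-1/2}\|\lesssim\ell$ from the rank-one structure $A_i=K_S\circ H_{NS}$ and the Schur product theorem. The logarithmic factors, the boundedness range and the union over the stratification of $R$ together give $\log^{3/2}(\ell/(\lambda\nu))$ and the $\sqrt{\log(16/\delta)/N}$ tail, with $\delta$ split among a few events.

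\textbf{Step 3 (combination).} Use the basic inequality
\[
\tilde{\cal L}(\hat Q)-\tilde{\cal L}(\tilde Q)\le\big[\tilde{\cal L}(\hat Q)-\widehat{\cal L}_\nu(\hat Q)\big]+\big[\widehat{\cal L}_\nu(\tilde Q)-\tilde{\cal L}(\tilde Q)\big],
\]
which holds because $\widehat{\cal L}_\nu(\hat Q)\le\widehat{\cal L}_\nu(\tilde Q)$. Both brackets are bounded by the supremum from Step 2, since both $\hat Q$ and $\tilde Q$ lie in $\mathcal Q_R$. This yields the stated bound.

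The main obstacle is Step 1 together with the operator-norm control in Step 2. Both require comparing $A_i$, the matrices $H_{NS}$ and $\phi\phi^\top$ to the regulariser $A_0$ uniformly in $x$ and $S$, with only polynomial dependence on $\ell$. The Gram matrix $K$ is ill-conditioned when $\eta$ is large, so $K^{-1}$ cannot be used naively. Our first attempt will be a reproducing-property bound: $\phi(x)^\top Q\phi(x)\le \Tr(QK)\sup_x k(x,x)$ for $Q=K^{-1/2}MK^{-1/2}$, which is valid in the RKHS parametrisation. Marginalisation only integrates this bound, which controls $\Tr(QA_i)$ by $\Tr(QK)$ times a constant depending on $\eta$ and $d$. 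The dependence on $\eta$ should then be absorbed into $C$ through the choice $\epsilon=\eta$.
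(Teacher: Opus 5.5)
Your skeleton matches the paper's proof. It has three parts: a trace bound on $\hat Q$ by comparison with a feasible point; the Rademacher complexity of a trace ball after whitening by $K_\eta$, with a $V_D/\nu$ contraction; and an ERM deviation bound, for which the paper cites Kakade et al.\ and Thm.~26.5 of Shalev-Shwartz--Ben-David instead of matrix Bernstein. However, two quantitative steps fail as written.

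\textbf{The operator-norm bound in Step 2.} You assert $\|A_0^{-1/2}A_iA_0^{-1/2}\|_{op}\lesssim\ell$ and conclude a Rademacher complexity $\lesssim R\sqrt{\log\ell/N}$. Matrix Bernstein/Khintchine actually gives $R\max_i\|B_i\|_{op}\sqrt{\log\ell/N}$, so an $\ell$ there makes the prefactor $(\ell^2+\ell/\lambda)/\nu$, and you do not get the stated bound. What is needed is the $\ell$-free domination $A_{x,S}\preceq K_\eta$, i.e.\ $\|K_\eta^{-1/2}A_{x,S}K_\eta^{-1/2}\|_{op}\le 1$ uniformly in $x,S$; this is Lemma~\ref{lem:AinvK}. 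Rank-one structure plus the Schur product theorem only gives $A_{x,S}\succeq 0$, not a Loewner comparison. The paper proves the domination from $H_j\preceq K_j$, $\phi_j\phi_j^\top\preceq K_j$, and monotonicity of Hadamard products under $\preceq$.

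Your closing reproducing-kernel remark gives the domination more directly. The inequality $\phi(y)^\top Q\phi(y)\le k_\eta(y,y)\Tr(QK_\eta)=\Tr(QK_\eta)$ for all $Q\succeq 0$ says exactly $\phi(y)\phi(y)^\top\preceq K_\eta$, and no reparametrisation is needed. Integrating out the missing coordinates preserves $\preceq$. The resulting constant is a volume factor (equal to $1$ with the normalised $H_j$) and is independent of $\eta$. That independence is essential: the proof uses $\epsilon=\nu$ (the ``$\epsilon=\eta$'' in the statement is a typo), so $\eta\asymp\nu^{-2/s}/\log(2/\nu)$ could never be absorbed into $C$. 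Likewise, in Step 1, $\phi(x)^\top K_\eta^{-1}\phi(x)\le k_\eta(x,x)=1$, not $\lesssim\ell$.

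\textbf{The comparison point in Step 1.} The choice $Q_0=H^{-1}/\ell$ fails. Here $H=c_{2\eta}K_{\eta/2}$ is a wider-bandwidth Gram matrix. On the dense meshes required by \cref{thm:exists-barQ} (fill distance $h\lesssim 1/(\sqrt\eta\log(2/\epsilon))$) it is far worse conditioned than $K_\eta$, so $\Tr(H^{-1}K_\eta)/\ell$ need not be polynomial in $\ell$.

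A feasible point with trace $O(\ell)$ is precisely what the approximation theory supplies. The paper compares $\hat Q$ with $\tilde Q$ itself, using $\Tr(\tilde QK_\eta)\le\tilde{\cal L}(\tilde Q)/\lambda\le C_3\ell+(2C_1/\lambda)\nu\log(C_2/\nu)$, which follows from \cref{thm:approx-error-tildeQ} with $\epsilon=\nu$. This holds with probability $1-\delta/2$ over the random mesh, and the other $\delta/2$ is spent on the deviation bound, hence $\log(8/\tau)=\log(16/\delta)$. Given the mesh event, $R$ is deterministic, so no stratification over $R$ is needed. You also need $K_\eta\preceq cA_0$, not ``$A_0$ dominated by $K_\eta$'', for $\Tr(QA_0)$ to control $\Tr(QK_\eta)$.

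With these two repairs your argument goes through, and it is slightly cleaner than the paper's in two respects. The regulariser appears identically in $\widehat{\cal L}_\nu$ and $\tilde{\cal L}$, so it cancels in your basic inequality and the loss range is only logarithmic; the paper instead carries $\lambda R$ in its range constant. And your $\nu u_S$ offset matches $\tilde{\cal L}$ exactly, rather than through a constant $\alpha$.
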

See Appendix \ref{appx:proof_th_learn} for the full proof and Appendix \ref{appx:lemmas} for additional lemmas.

\paragraph{On choosing anchor points.}  Achieving a $\delta_{\ell}$–dense mesh is straightforward in moderate dimensions: draw anchors i.i.d. from a quasi–uniform distribution whose support contains that of $p^{\star}$ (e.g. uniformly in a ball or hypercube).  In higher dimensions we resort to data–dependent heuristics discussed in Section~\ref{sec:impl}.

\subsection{Combined convergence}
Putting the two pieces together, balancing Theorem~\ref{thm:approx-error-tildeQ} and Theorem~\ref{thm:learn} with $\ell\asymp N^{d/(2s+d)}$ yields the classical minimax rate.

\begin{theorem}[KL consistency of marginals]\label{thm:marginal}
Let $\delta \in (0,1)$. Let $p^*$ satisfy assumptions (i), and the mesh points ${\cal W}$ satisfy (iii). Let $\hat{p}$ be the solution of \cref{eq:primal}, defined as \cref{eq:phat}. By choosing 
$$\nu = \tilde O\left(N^{-\frac{s}{6s+2d}}\right), ~~ \lambda = \tilde O\left(N^{-\frac{s+d}{6s+2d}}\right), ~~ \ell = O\left(N^{\frac{d}{6s+2d}} \left(\log\left(\frac{N}{\delta}\right)\right)^{d/2}\right), ~~ \eta  = O(\ell^{2/d})$$
the following holds with probability at least $1-\delta$
\[
  \int \! \mathrm{KL}(p^\star_S\,\|\,\hat{p}_S)\,\mathrm d\mu(S)
  = O\bigl(N^{-s/(6s+2d)}\bigr).
\]
\end{theorem}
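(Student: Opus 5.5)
The plan is to combine the risk decomposition \eqref{eq:decomp} with Theorems~\ref{thm:approx-error-tildeQ} and~\ref{thm:learn}, and then tune $\nu,\lambda,\ell,\epsilon$ so the two terms balance. The left-hand side $\mathcal L(\hat p)=\int \KL(p^\star_S\|\hat p_S)\,d\mu(S)$ is at most $\tilde{\mathcal L}(\hat Q)=\mathcal L(\hat p)+\lambda\Tr(\hat Q A_0)$, because $A_0\succeq 0$ and $\hat Q\succeq0$ make the regulariser nonnegative. By \eqref{eq:decomp} it therefore suffices to bound the learning error plus the approximation error $\tilde{\mathcal L}(\tilde Q)$.

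We split the confidence budget, spending $\delta/2$ on each theorem and taking a union bound. Theorem~\ref{thm:approx-error-tildeQ} then gives, with probability $1-\delta/2$ over the mesh,
\[
\tilde{\mathcal L}(\tilde Q)\le C_1\log(C_2/\nu)(\epsilon+\nu)+C_3\ell\lambda ,
\]
where $\ell\asymp\epsilon^{-d/s}\log(C/\epsilon\delta)^{d/2}$. Theorem~\ref{thm:learn}, conditionally on the mesh and with probability $1-\delta/2$ over the sample, gives a learning error of order
\[
\nu^{-1}\bigl(\ell+\lambda+\lambda^{-1}\bigr)\log^{3/2}\!\bigl(\ell/(\lambda\nu)\bigr)\sqrt{\log(32/\delta)/N}.
\]
Summing and ignoring logarithmic factors, the total is
\[
\tilde O\!\Bigl(\epsilon+\nu+\ell\lambda+\nu^{-1}(\ell+\lambda^{-1})N^{-1/2}\Bigr),
\qquad \ell\approx\epsilon^{-d/s}.
\]

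Next we balance. We set $\epsilon=\nu=N^{-a}$ and $\lambda=N^{-b}$, so that $\ell=N^{ad/s}$. The terms are then, in order of exponent of $N$:
\begin{itemize}
\item $\nu$ and $\epsilon$, of order $N^{-a}$;
\item $\ell\lambda$, of order $N^{ad/s-b}$;
\item $\ell/(\nu\sqrt N)$, of order $N^{ad/s+a-1/2}$;
\item $1/(\lambda\nu\sqrt N)$, of order $N^{a+b-1/2}$.
\end{itemize}
Equating $a = b-ad/s$ and $a=1/2-a-b$ gives $b=a(s+d)/s$ and $2a+a(s+d)/s=1/2$. Solving yields $a=s/(6s+2d)$ and $b=(s+d)/(6s+2d)$, which are exactly the prescribed $\nu$ and $\lambda$, with $\ell=N^{d/(6s+2d)}$ up to the $\log(N/\delta)^{d/2}$ factor. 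We then check the remaining term $\ell/(\nu\sqrt N)$: its exponent is $ad/s+a-1/2=(d+s-3s-d)/(6s+2d)=-2s/(6s+2d)$, which is smaller than $N^{-a}$, so it does not dominate. The choice $\eta$ follows from Theorem~\ref{thm:exists-barQ}: $\eta\asymp\epsilon^{-2/s}$ up to logarithms, i.e.\ $\ell^{2/d}$. The prefactor $\log(C_2/\nu)\log^{3/2}(\ell/\lambda\nu)$ is polylogarithmic in $N$ and is absorbed in the $\tilde O$ of the parameter choices.

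The main obstacle is consistency of hypotheses between the two cited theorems. Theorem~\ref{thm:learn} is stated with $\ell,\eta$ tied to a particular $\epsilon$, and we need that same $\epsilon=\nu$ to feed both bounds. We must also check that the required constraints ($\epsilon\le1/e$, $\nu\le1/e$, $s>d/2$) hold for $N$ large. Finally, the $\log$ factors must be shown to only inflate the rate by polylog terms, and we would state the final $O(\cdot)$ with those polylogarithmic factors understood, matching the $\tilde O$ in the parameter choice.
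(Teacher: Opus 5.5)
Your proposal is correct and follows the paper's proof. You bound $\int \KL(p^\star_S\|\hat p_S)\,\mathrm{d}\mu(S)$ by $\tilde{\mathcal L}(\hat Q)$ using nonnegativity of the regulariser, split it via \eqref{eq:decomp}, and control the two pieces with Theorem~\ref{thm:learn} (taking $\epsilon=\nu$) and Theorem~\ref{thm:approx-error-tildeQ}. Your explicit exponent balancing recovers $\nu\asymp N^{-s/(6s+2d)}$ and $\lambda\asymp N^{-(s+d)/(6s+2d)}$, and shows the $\ell/(\nu\sqrt N)$ term is of lower order $N^{-2s/(6s+2d)}$; this spells out arithmetic the paper leaves implicit. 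Your remark that the rate holds only up to polylogarithmic (and $\delta$-dependent) factors, so it is really $\tilde O$, is a fair caveat that the paper's own proof also glosses over.
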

\begin{proof}
Note that, by definition of $\tilde{\cal L}$, we have
$$
\int \! \mathrm{KL}(p^\star_S\,\|\,\hat{p}_S)\,\mathrm d\mu(S) \leq \left(\tilde{\cal L}(\hat{Q}) - \tilde{\cal L}(\tilde{Q})\right) + \tilde{\cal L}(\tilde{Q}).
$$
By bounding the first term via the generalization error bound in \cref{thm:learn} with $\epsilon = \nu$, and the second by the approximation result in \cref{thm:approx-error-tildeQ} we obtain the final result with probability at least $1-\delta$.
\end{proof}

\section{Computation and Optimisation}\label{sec:comp}

The guarantees above are only meaningful if the optimiser scales to real datasets. Although the estimator is convex, a naive evaluation of Objective~\eqref{eq:primal-short} scales as $\mathcal{O}(N\ell^{2})$.  We now show how kernel structure shrinks both time and memory, transforming our theoretical estimator into a practical solver.

\paragraph{Matrix construction}\label{subsec:matrix}
Let $\mathcal W=\{w_1,\dots,w_\ell\}\subset\R^d$ be the anchor mesh and denote by $k_\eta(x,x')=\exp(-\eta\|x-x'\|^2)$ the isotropic Gaussian kernel. 
The $moment$ $matrix$ is defined as
\begin{equation}\label{eq:H}
  H_{NS,\alpha\beta}=\int_{D_{NS}} k_\eta(x_{NS},w_{NS,\alpha})\,k_\eta(x_{NS},w_{NS,\beta})\,d x,
\end{equation}
Where $D_{NS}$ is the projection of $D$ on $\R^{|NS|}.$
For an incomplete sample $(x_i,S_i)$ with observed coordinates $S_i$ and complement $N S_i$ we split the kernel bandwidth as $\eta=(\eta_{S_i},\eta_{NS_i})$ and form the rank--one matrix $K_{S_i}(x_{i,S_i})=\phi_{S_i}(x_{i,S_i})\phi_{S_i}(x_{i,S_i})^\top$.  Element--wise the data–dependent matrix reads
\begin{equation}\label{eq:A}
  A_{i,\alpha\beta}=K_{S_i}(x_{i,S})_{\alpha,\beta}H_{NS_i,\alpha\beta}
\end{equation}
Both $H$ and $\{A_i\}$ are \emph{pre--computable} once the anchors and bandwidths are fixed, and their sparse--Kronecker structure is key to efficient optimisation.

\paragraph{Regularised objective}\label{subsec:objective}
Given $A_0:=K_\eta$ and $\{A_i\}_{i=1}^N$ we solve the trace-- and log--det--regularised empirical risk
\begin{equation}\tag{P}\label{eq:primal-short}
  \min_{Q\succeq0}\;f(Q):=-\frac1N\sum_{i=1}^N\log(\Tr(QA_i)+\alpha)+\lambda\Tr(QA_0)-\mu\log\det Q\quad\text{s.t. }\Tr(QH)=1,
\end{equation}
where $\lambda$ controls RKHS complexity while $\mu>0$ acts both as a self--concordant barrier and also if interpreted from a Bayesian viewpoint, as a Wishart prior weight.  The equality constraint simply normalises $p_Q$ into a density.

\paragraph{Interior--point Newton solver}\label{subsec:newton}
The gradient $g$ and Hessian $G$ of the barrier--loss are fully derived in Appendix~\ref{appx:comput}. For the optimization of \cref{eq:primal} we used a damped Newton method algorithm with a $-\mu\log\det(Q)$ barrier to guarantee the positiveness of the matrix $Q$ and with a schedule for $\mu \to 0$ as in \cite{nesterov1994interior}. Explicit details about the algorithm are recalled in \cref{eq:interior-damped-Newton}.

\paragraph{Hyper-parameter optimization}\label{subsec:hyper_param_optimization}
The matrices $A_i$ depend implicitly from the anchor points $\cal W$ and the precision $\eta.$ We extend Problem (\ref{eq:primal-short}) to make the dependence from $\cal W$ and $\eta$ explicit, and we perform an alternating minimization procedure to optimize the couple parameter-hyperparameter $(Q,(\mathcal{W},\eta)).$ See Appendix \ref{appx:hyper_parameters_optimization}.

\paragraph{Practical considerations}\label{sec:impl}
\textbf{Memory footprint.}  The moment matrix $H$ together with the precomputed set $\{A_i\}$ requires $\mathcal{O}(\ell^{2})$ storage, while the data term is streamed mini--batch wise when $N$ is large.  \\
\textbf{Barrier schedule.}  The self--concordant barrier weight $\mu$ is decreased geometrically, trading a modest statistical bias for markedly faster convergence (practical guidelines in Appendix~\ref{appx:comput})  \\
\textbf{Anchor budget.}  Growing $\ell$ tightens approximation error until the $N\ell^{2}$ term dominates runtime; empirical timings and accuracy curves are reported in Appendix~\ref{appx:comput}.

\subsection{Complexity vs.\ statistical accuracy}\label{subsec:complexity}

To gauge tractability we derive the sample and anchor budgets required for a target KL error $\varepsilon$.  Fix the Hölder–smoothness order $s$~\citep{tsybakov2009nonparametric}.  To attain a Kullback–Leibler error $\varepsilon$ we follow the minimax prescription of Theorem~\ref{thm:marginal}:
$$\nu = \tilde O\left(N^{-\frac{s}{6s+2d}}\right), ~~ \lambda = \tilde O\left(N^{-\frac{s+d}{6s+2d}}\right), ~~ \ell = O\left(N^{\frac{d}{6s+2d}} \left(\log\left(\frac{N}{\delta}\right)\right)^{d/2}\right), ~~ \eta  = O(\ell^{2/d}),$$
The rate of convergence of Theorem (\ref{thm:marginal}) is $O(N^{-\frac{s}{6s+2d}}).$ So, if the threshold is of order $\varepsilon=O(N^{-\frac{s}{6s+2d}}),$ we get that we need $N=O(\varepsilon^{-6-\frac{2d}{s}})$ observations, with the regularizer value $\lambda=O(\varepsilon^{1+d/s} )$
and $\ell=O(\varepsilon^{-\frac{d}{s}} \log(1/\varepsilon)^{d/2})$ nodes.

\paragraph{Per–iteration cost.}  With $\ell$ anchors, one Newton step costs $\mathcal O(N\ell^{2}+\ell^{3})$ flops and requires $\mathcal O(N^3+N\ell^{2})$ in memory, using the SWM formula. Working with the Hessian cost $O(\ell^6)$ in memory and $O(N\ell^2+\ell^3+\ell^6)$ in time, to build the Hessian and solving the linear system.

\paragraph{Number of iterations.}  The self-concordance of $-\log\det$~\citep[Exercise 3.3.3, Chapter 4.4]{nesterov1994interior} bounds the number of Newton steps by $\mathcal O\bigl(\sqrt{\ell}\,\log(1/\varepsilon)\bigr)$.

\paragraph{Total complexity.}  Substituting the above values of $N$ and $\ell$ gives, for the SWM method
\[
\underbrace{\mathcal O\!\bigl((
\varepsilon^{{-\frac{5d}{2s}}- \frac{s}{6s+2d} } +  \varepsilon^{-\frac{7d}{2s}})\,(\log\tfrac1\varepsilon)^{d/2+1}\bigr)}_{\text{time}}
\quad\text{and}\quad
\underbrace{\mathcal O\!\bigl(\varepsilon^{-24-\frac{10d}{s}
}\bigr)}_{\text{memory}}.
\]
See Appendix \ref{sct:direct_method} for more details. Working with the Hessian will cost
\[
\underbrace{\mathcal O\!\bigl((
\varepsilon^{{-\frac{5d}{2s}}- \frac{s}{6s+2d} } +  \varepsilon^{-\frac{11d}{2s}})\,(\log\tfrac1\varepsilon)^{d/2+1}\bigr)}_{\text{time}}
\quad\text{and}\quad
\underbrace{\mathcal O\!\bigl(\varepsilon^{-\frac{6d}{s}
}\bigr)}_{\text{memory}}.
\]
Asymptotically, it is more convenient to work with the Hessian matrix when the problem exhibits sufficient regularity.
In moderate dimensions ($d\!\le\!20$) and with $s\!\ge\!2$, the time exponent ranges between $2$ and $4$, matching the empirical scaling observed in Section~\ref{sec:experiments}.

\noindent We validate these computational claims and the estimator's distributional fidelity in the empirical study that follows.

\section{Discussion}
\label{sec:discussion}

The proposed framework demonstrates that distributionally faithful imputation with sound optimisation and theory is feasible. See Appendix~\ref{appx:comput} for additional compute and memory considerations.  Beyond imputation, a learned PSD density enables probabilistic downstream tasks such as conformal prediction under missingness, uncertainty‐aware decision making, and outlier detection.  Future directions include extending the approach to Missing At Random settings, accelerating anchor selection via sparse inducing points, and combining PSD models with deep kernels for image or text data.

\section{Conclusion}
\label{sec:conclusion}

We reframed imputation as convex marginal KL minimisation, introduced PSD kernel densities with closed‐form marginals, and solved the resulting optimisation with an efficient interior‐point Newton method.  The estimator enjoys consistency guarantees and yields both single and multiple imputations.  Experiments show superior distributional accuracy on diverse datasets, highlighting that our method better preserves statistical properties than popular baselines that optimise RMSE.



\bibliographystyle{plain}
\bibliography{references}

@article{dempster1977maximum,
  title={Maximum likelihood from incomplete data via the EM algorithm},
  author={Dempster, Arthur P and Laird, Nan M and Rubin, Donald B},
  journal={Journal of the Royal Statistical Society: Series B (Methodological)},
  volume={39},
  number={1},
  pages={1--38},
  year={1977},
  publisher={Wiley}
}

@article{kakade2012regularization,
  title={Regularization techniques for learning with matrices},
  author={Kakade, Sham M and Shalev-Shwartz, Shai and Tewari, Ambuj},
  journal={The Journal of Machine Learning Research},
  volume={13},
  number={1},
  pages={1865--1890},
  year={2012},
  publisher={JMLR. org}
}

@book{cover1999elements,
  title={Elements of information theory},
  author={Cover, Thomas M},
  year={1999},
  publisher={John Wiley \& Sons}
}

@article{rudi2020finding,
  title={Finding global minima via kernel approximations},
  author={Rudi, Alessandro and Marteau-Ferey, Ulysse and Bach, Francis},
  journal={arXiv preprint arXiv:2012.11978},
  year={2020}
}

@article{van2011mice,
  title={mice: Multivariate imputation by chained equations in R},
  author={Van Buuren, Stef and Groothuis-Oudshoorn, Karin},
  journal={Journal of Statistical Software},
  volume={45},
  number={3},
  pages={1--67},
  year={2011},
  publisher={Foundation for Open Access Statistics}
}

@article{hastie2015matrix,
  title={Matrix completion and low-rank SVD via fast alternating least squares},
  author={Hastie, Trevor and Mazumder, Rahul and Lee, Jason and Zadeh, Reza},
  journal={Journal of Machine Learning Research},
  volume={16},
  pages={3367--3402},
  year={2015}
}

@inproceedings{yoon2018gain,
  title={GAIN: Missing data imputation using generative adversarial nets},
  author={Yoon, Jinsung and Jordon, James and van der Schaar, Mihaela},
  booktitle={International Conference on Machine Learning},
  pages={5689--5698},
  year={2018},
  organization={PMLR}
}

@article{wang2021deep,
  title={Are deep learning models superior for missing data imputation in large surveys? Evidence from an empirical comparison},
  author={Wang, Zhenhua and Akande, Olanrewaju and Poulos, Jason and Li, Fan},
  journal={arXiv preprint arXiv:2103.09316},
  year={2021}
}

@article{naf2024good,
  title={What Is a Good Imputation Under MAR Missingness?},
  author={Näf, Jeffrey and Scornet, Erwan and Josse, Julie},
  journal={arXiv preprint arXiv:2403.19196},
  year={2024}
}

@inproceedings{muzellec2020missing,
  title={Missing data imputation using optimal transport},
  author={Muzellec, Boris and Josse, Julie and Boyer, Claire and Cuturi, Marco},
  booktitle={International Conference on Machine Learning},
  pages={7130--7140},
  year={2020},
  organization={PMLR}
}

@article{marteau2024second,
  title={Second order conditions to decompose smooth functions as sums of squares},
  author={Marteau-Ferey, Ulysse and Bach, Francis and Rudi, Alessandro},
  journal={SIAM Journal on Optimization},
  volume={34},
  number={1},
  pages={616--641},
  year={2024},
  publisher={SIAM}
}

@book{nesterov1994interior,
  title={Interior-point polynomial algorithms in convex programming},
  author={Nesterov, Yurii and Nemirovskii, Arkadii},
  year={1994},
  publisher={SIAM}
}

@article{szekely2017energy,
  title={The energy of data},
  author={Sz{\'e}kely, G{\'a}bor J and Rizzo, Maria L},
  journal={Annual Review of Statistics and Its Application},
  volume={4},
  number={1},
  pages={447--479},
  year={2017},
  publisher={Annual Reviews}
}

@article{csiszar1975divergence,
  title={{I}-divergence geometry of probability distributions and minimization problems},
  author={Csisz{\'a}r, Imre},
  journal={The Annals of Probability},
  volume={3},
  number={1},
  pages={146--158},
  year={1975},
  publisher={Institute of Mathematical Statistics}
}

@book{little2019statistical,
  title={Statistical Analysis with Missing Data},
  author={Little, Roderick J A and Rubin, Donald B},
  edition={3},
  publisher={John Wiley and Sons},
  year={2019}
}

@book{shalev2014understanding,
  title={Understanding Machine Learning: From Theory to Algorithms},
  author={Shalev-Shwartz, Shai and Ben-David, Shai},
  year={2014},
  publisher={Cambridge University Press}
}

@article{Sherman1950,
  title={Adjustment of an inverse matrix corresponding to a change in one element of a given matrix},
  author={Sherman, Jack and Morrison, Winifred J.},
  journal={Annals of Mathematical Statistics},
  volume={21},
  number={1},
  pages={124--127},
  year={1950},
  publisher={Institute of Mathematical Statistics}
}

@techreport{Woodbury1950,
  title={Inverting modified matrices},
  author={Woodbury, Max A.},
  institution={Statistical Research Group, Princeton University},
  number={42},
  year={1950}
}

@book{boyd2004convex,
  title={Convex Optimization},
  author={Boyd, Stephen and Vandenberghe, Lieven},
  year={2004},
  publisher={Cambridge University Press}
}

@book{villani2009optimal,
  title={Optimal Transport: Old and New},
  author={Villani, Cédric},
  year={2009},
  publisher={Springer}
}

@article{olkin1964multivariate,
  title={Multivariate beta distributions and independence properties of the Wishart distribution},
  author={Olkin, Ingram and Rubin, Herman},
  journal={The Annals of Mathematical Statistics},
  pages={261--269},
  year={1964},
  publisher={Institute of Mathematical Statistics}
}

@inproceedings{kingma2015adam,
  title={{Adam}: A Method for Stochastic Optimization},
  author={Kingma, Diederik P. and Ba, Jimmy},
  booktitle={International Conference on Learning Representations},
  year={2015}
}

@book{tsybakov2009nonparametric,
  title={Introduction to Nonparametric Estimation},
  author={Tsybakov, Alexander B.},
  year={2009},
  publisher={Springer}
}

@article{naf2023imputation,
  title={Imputation scores},
  author={Näf, Jeffrey and Spohn, Meta-Lina and Michel, Loris and Meinshausen, Nicolai},
  journal={The Annals of Applied Statistics},
  volume={17},
  number={3},
  pages={2452--2472},
  year={2023},
  publisher={Institute of Mathematical Statistics}
}

@article{Murray2018,
  title={Multiple Imputation: A Review of Practical and Theoretical Findings},
  author={Murray, Jared S.},
  journal={Statistical Science},
  volume={33},
  number={2},
  pages={142--159},
  year={2018},
  publisher={Institute of Mathematical Statistics}
}

@article{stekhoven2012missforest,
  title={MissForest—non-parametric missing value imputation for mixed-type data},
  author={Stekhoven, Daniel J and Bühlmann, Peter},
  journal={Bioinformatics},
  volume={28},
  number={1},
  pages={112--118},
  year={2012},
  publisher={Oxford University Press}
}

@article{ma2020vaem,
  title={Vaem: a deep generative model for heterogeneous mixed type data},
  author={Ma, Chao and Tschiatschek, Sebastian and Turner, Richard and Hern{\'a}ndez-Lobato, Jos{\'e} Miguel and Zhang, Cheng},
  journal={Advances in Neural Information Processing Systems},
  volume={33},
  pages={11237--11247},
  year={2020}
}

@article{tashiro2021csdi,
  title={Csdi: Conditional score-based diffusion models for probabilistic time series imputation},
  author={Tashiro, Yusuke and Song, Jiaming and Song, Yang and Ermon, Stefano},
  journal={Advances in Neural Information Processing Systems},
  volume={34},
  pages={24804--24816},
  year={2021}
}

@article{morvan2024imputation,
  title={Imputation for prediction: beware of diminishing returns},
  author={Le Morvan, Marine and Varoquaux, Gaël},
  journal={arXiv preprint arXiv:2407.19804},
  year={2024}
}

@article{josse2024consistency,
  title={On the consistency of supervised learning with missing values},
  author={Josse, Julie and Chen, Jacob M and Prost, Nicolas and Varoquaux, Gaël and Scornet, Erwan},
  journal={Statistical Papers},
  volume={65},
  number={9},
  pages={5447--5479},
  year={2024},
  publisher={Springer}
}

@article{rudi2021psd,
  title={PSD representations for effective probability models},
  author={Rudi, Alessandro and Ciliberto, Carlo},
  journal={Advances in Neural Information Processing Systems},
  volume={34},
  pages={19411--19422},
  year={2021}
}

@article{marteau2020non,
  title={Non-parametric models for non-negative functions},
  author={Marteau-Ferey, Ulysse and Bach, Francis and Rudi, Alessandro},
  journal={Advances in neural information processing systems},
  volume={33},
  pages={12816--12826},
  year={2020}
}

@misc{wright2006numerical,
  title={Numerical optimization},
  author={Wright, Stephen J},
  year={2006}
}

@book{bhatia2013matrix,
  title={Matrix analysis},
  author={Bhatia, Rajendra},
  year={2013},
  publisher={Springer Science \& Business Media}
}

@article{pedregosa2011scikit,
  title={Scikit-learn: Machine learning in Python},
  author={Pedregosa, Fabian and Varoquaux, Ga{\"e}l and Gramfort, Alexandre and Michel, Vincent and Thirion, Bertrand and Grisel, Olivier and Blondel, Mathieu and Prettenhofer, Peter and Weiss, Ron and Dubourg, Vincent and others},
  journal={the Journal of machine Learning research},
  volume={12},
  pages={2825--2830},
  year={2011},
  publisher={JMLR. org}
}

@article{binette2019note,
  title={A note on reverse Pinsker inequalities},
  author={Binette, Olivier},
  journal={IEEE transactions on information theory},
  volume={65},
  number={7},
  pages={4094--4096},
  year={2019},
  publisher={IEEE}
}

@article{snelson2005sparse,
  title={Sparse Gaussian processes using pseudo-inputs},
  author={Snelson, Edward and Ghahramani, Zoubin},
  journal={Advances in neural information processing systems},
  volume={18},
  year={2005}
}

@article{bilmes1998gentle,
  title={A gentle tutorial of the EM algorithm and its application to parameter estimation for Gaussian mixture and hidden Markov models},
  author={Bilmes, Jeff A and others},
  journal={International computer science institute},
  volume={4},
  number={510},
  pages={126},
  year={1998},
  publisher={Berkeley, CA}
}

@article{byrne2013alternating,
  title={Alternating minimization as sequential unconstrained minimization: a survey},
  author={Byrne, Charles L},
  journal={Journal of Optimization Theory and Applications},
  volume={156},
  pages={554--566},
  year={2013},
  publisher={Springer}
}

@article{loczi2020explicit,
  title={Explicit and recursive estimates of the Lambert W function},
  author={L{\'o}czi, Lajos},
  journal={arXiv preprint arXiv:2008.06122},
  year={2020}
}

\newpage
\appendix

\begin{center}
    \huge
    Appendix of \\
    \TITLE
\end{center}
\vspace{1cm}

\addcontentsline{toc}{section}{Appendix}

{\Large
\begin{itemize}
\item Experiments - \cref{sec:experiments}. Toy and real data experiments.
\item Theory - \cref{appx:theory}. Theoretical analysis of the algorithm.
\item Lemmas - \cref{appx:lemmas}. Technical lemmas needed in the theoretical part.
\item Computations and Implementation- \cref{appx:comput}. Detailed analysis of the algorithm.

\end{itemize}
}
\vspace{0.5cm}

\section*{Notation Index (optional)}
\label{appx:notation}
\begin{center}
\begin{tabular}{ll}
$X$ & full data vector in $\mathbb R^d$\\
$S$ & random subset of observed coordinates\\
$\phi(x)$ & kernel feature map centred at anchors\\
$Q$ & PSD parameter matrix\\
$H$ & moment matrix $\int\phi\phi^T$\\
$A_i$ & data-dependent kernel matrix\\
$\mu$ & barrier weight in optimisation\\
\end{tabular}
\end{center}

\section{Experiments}
\label{sec:experiments}

\subsection{Experimental setup}\label{subsec:setup}

Our empirical study is exploratory: it seeks qualitative evidence that matching \emph{all} observed marginals translates into faithful joint distributions. In its current form, the algorithm is implemented with the regularization parameter set to $\alpha=0$ in Problem \ref{eq:primal-short}. A comprehensive benchmark and large--scale engineering are deferred to future work.

We first visualise performance on three synthetic datasets (two--moon, concentric rings, and a 3--D cylinder) and then test scalability on real--world tabular datasets (listed in \Cref{tab:datasets}) selected so that optimisation fits comfortably in memory ($d\le60$, $n\le1.6\,\text{k}$).  Missingness is introduced completely at random with probabilities $p_{\text{miss}}\in\{0.2,0.4\}$; each experiment is repeated across five random seeds. The datasets are normalized and divided into training and test sets.

We benchmark our method against theory-based methods in \cite{muzellec2020missing}, using their publicly available implementation as well as their training and evaluation setup. As a simple baseline, we use coordinate-wise \textbf{mean imputation}. \texttt{IterativeImputer} \cite{pedregosa2011scikit} is inspired by the well known MICE software \cite{van2011mice} and consists of iterative imputation by conditional mean expectation. \textbf{SoftImpute} \cite{hastie2015matrix} is a low-rank method that alternates between solving least-squares problems and applying singular value thresholding. \textbf{OT} \cite{muzellec2020missing} is an optimal transport based method, which minimizes a suitable metric constructed using the Sinkhorn divergence.  

Our method \textbf{PSD\_Impute} appears as a conditional mean estimator $psd\_mean$ and as a multiple imputer $psd\_multiple$ with $m=10$ draws. An oracle variant $psd\_ideal$ selects the closest sample to the original non-missing observation, and is used as a sanity check. We compared $psd\_multiple$ against the multiple imputation counterpart of \texttt{IterativeImputer}, as it is a strong and efficient method in \cite{muzellec2020missing}.


\textbf{Metrics.} Energy Distance (ED) \cite{szekely2017energy} is our primary criterion because it compares full joint distributions; 2--Wasserstein OT (appendix) and RMSE \cite{muzellec2020missing} are reported only to verify that distributional gains do not harm point accuracy. Errors bar show one standard deviation around the mean over the five random seeds.

\begin{figure}[t]
\centering
\includegraphics[width=0.45\linewidth]{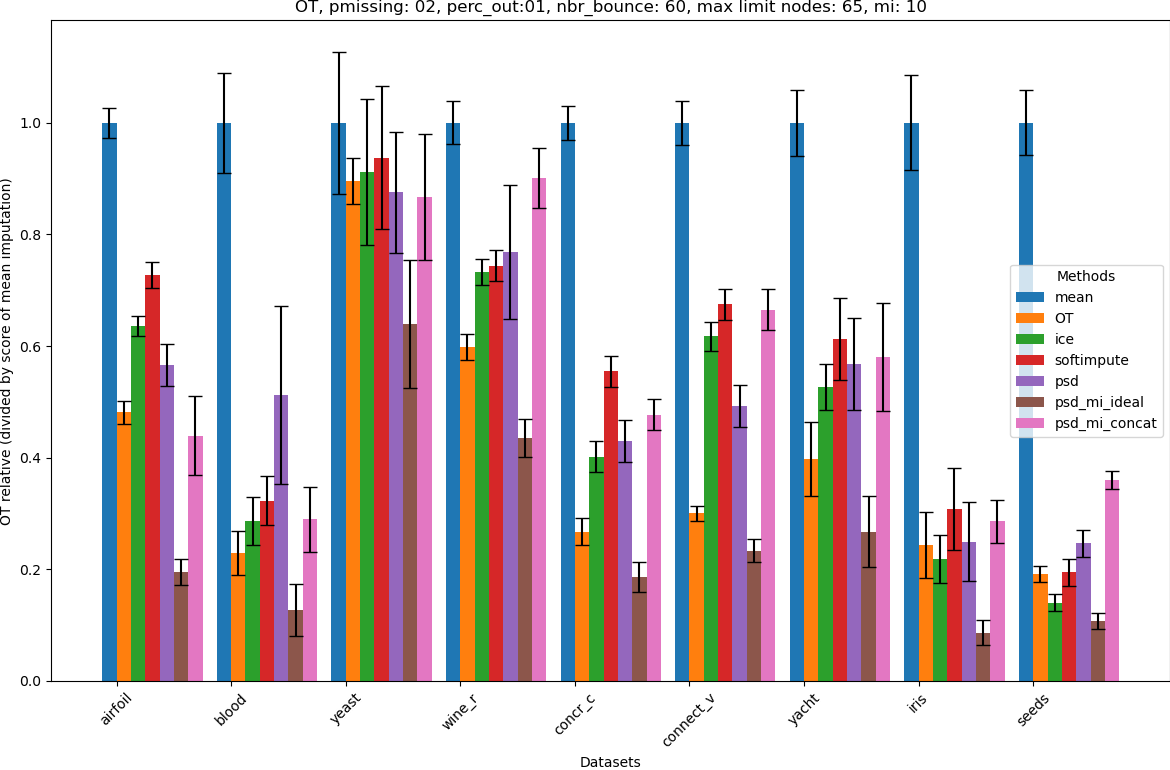}
\includegraphics[width=0.45\linewidth]{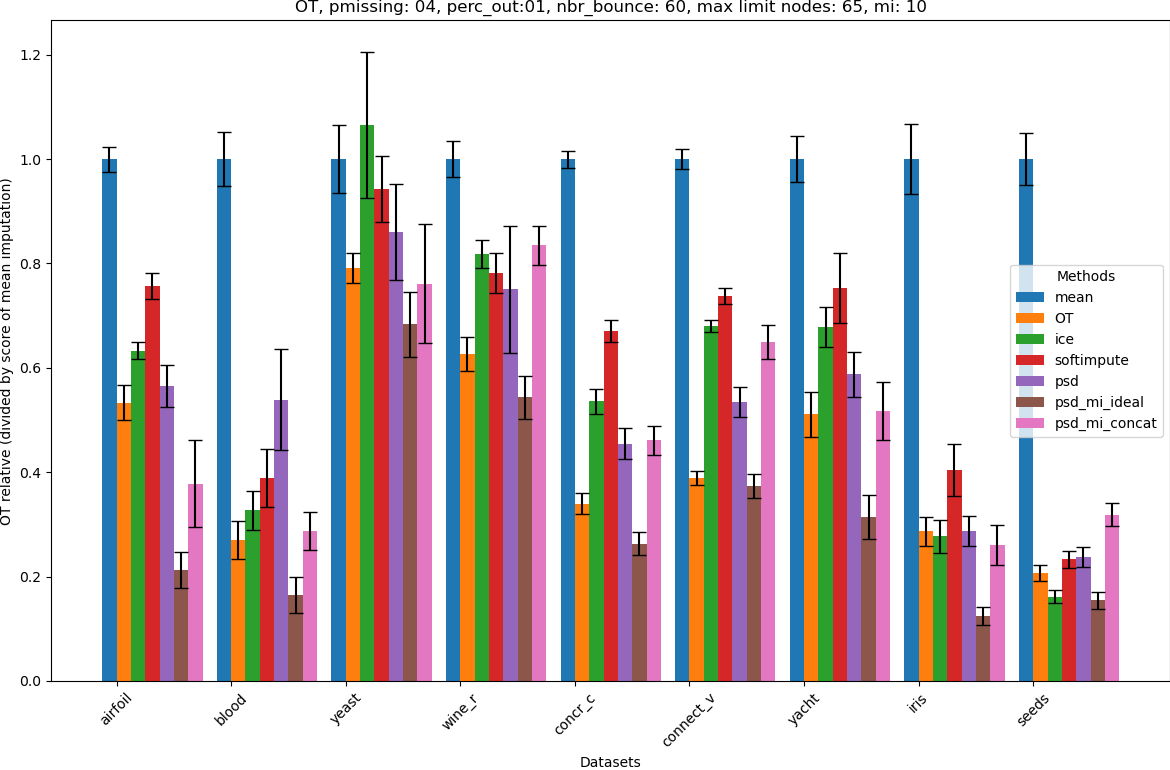}\\[-6pt]
\includegraphics[width=0.45\linewidth]{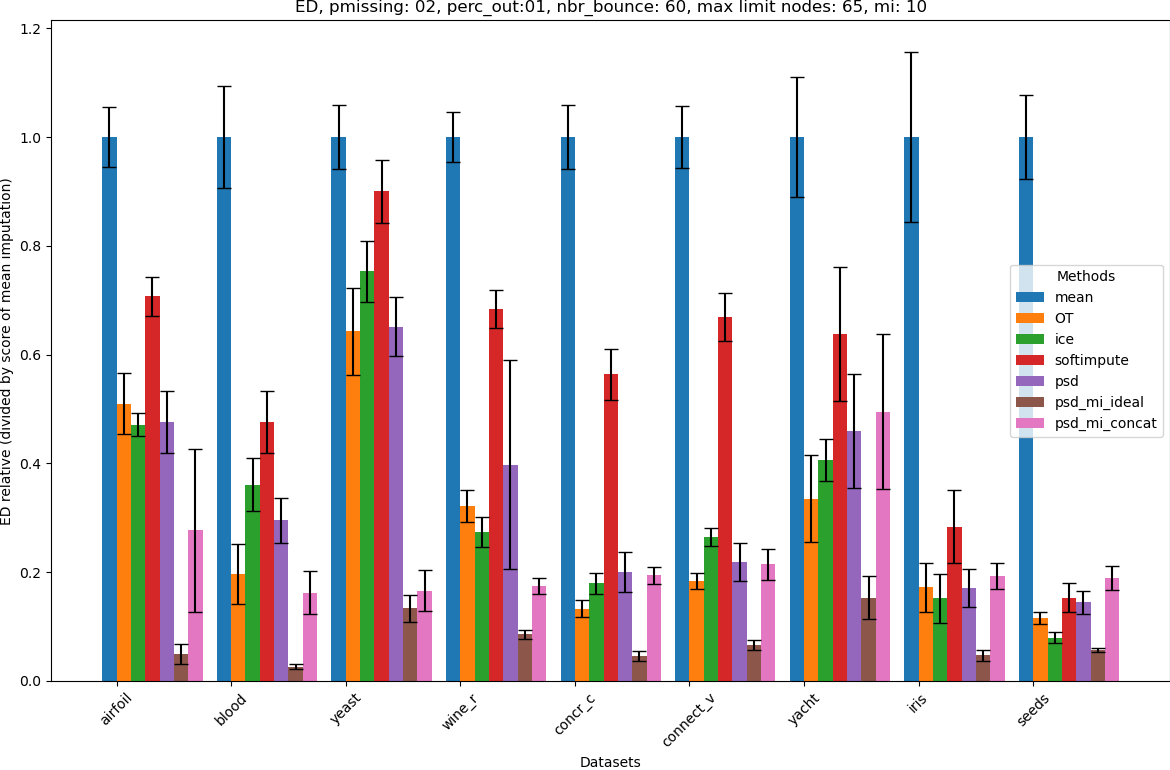}
\includegraphics[width=0.45\linewidth]{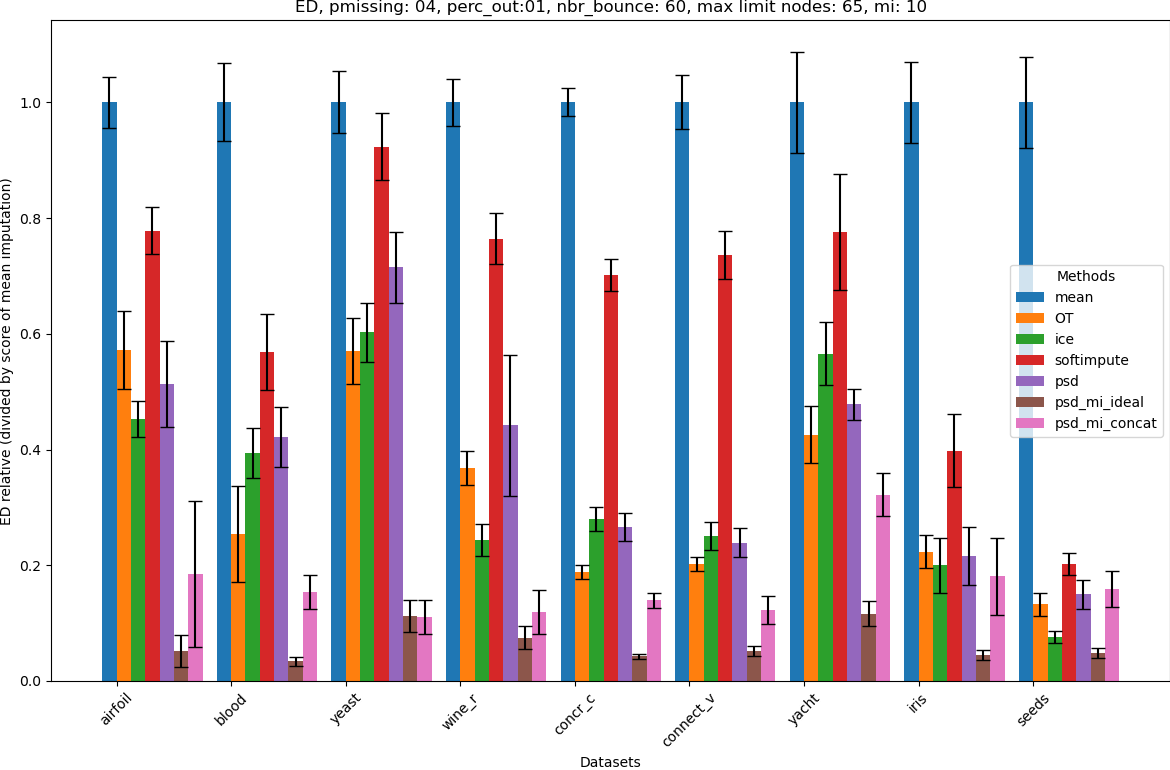}
\caption{\textbf{Distributional accuracy on real datasets.}  Top: Optimal Transport (OT); bottom: Energy Distance (ED).  Columns correspond to missing rates $p=0.2$ (left) and $0.4$ (right).  PSD-Impute outperforms baselines by $\ge$15\% on distributional metrics while matching RMSE.}
\label{fig:dist_metrics_ed_OT}
\end{figure}

\begin{figure}[h]
\centering
\includegraphics[width=0.48\linewidth]{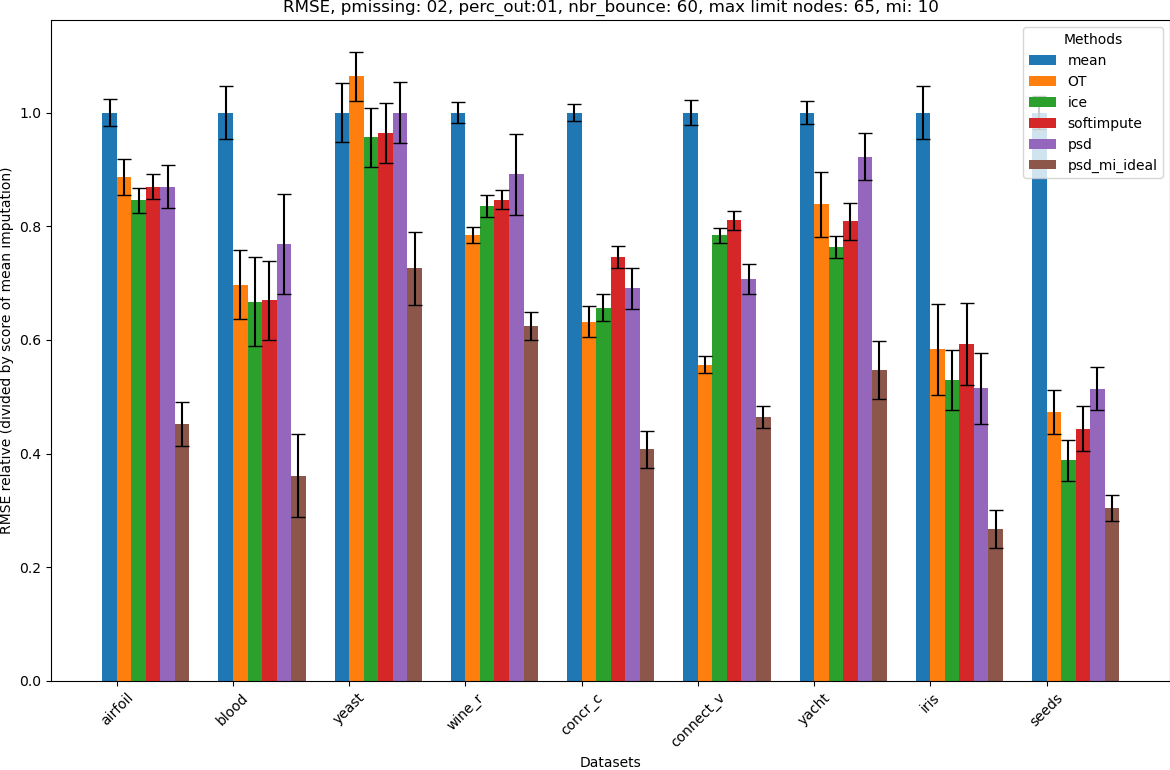}%
\includegraphics[width=0.48\linewidth]{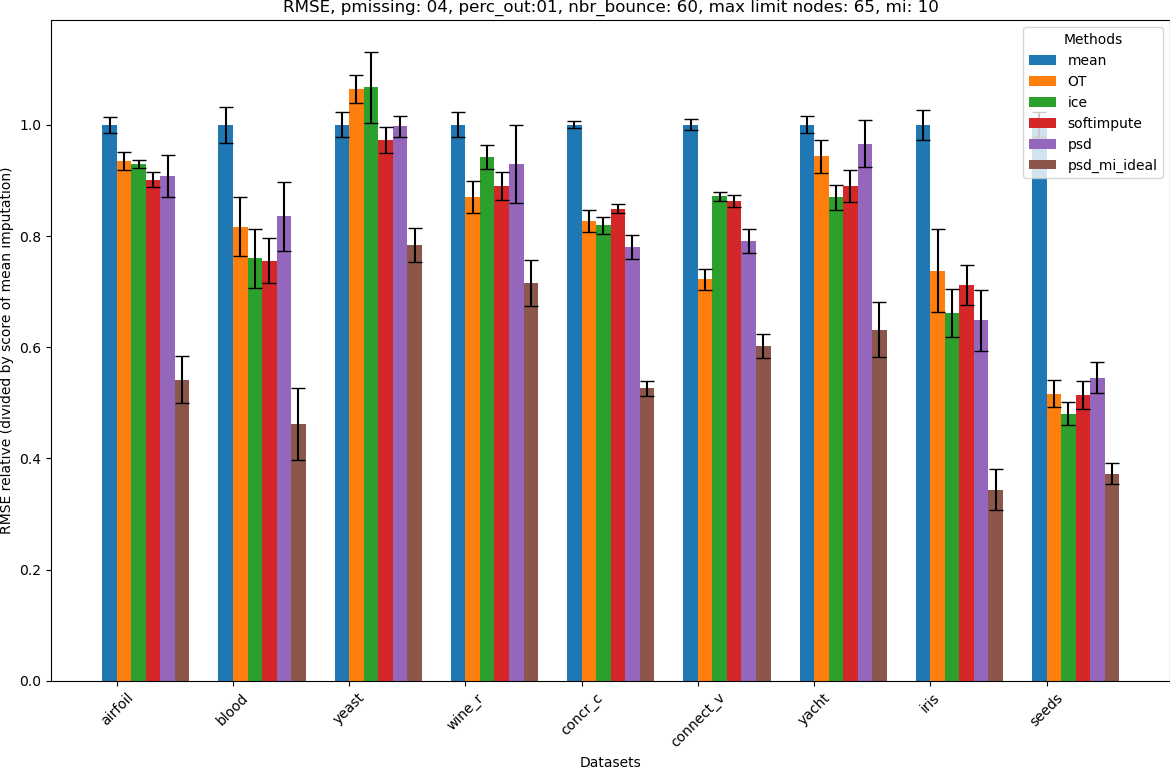}
\caption{RMSE counterparts to Figure \ref{fig:dist_metrics_ed_OT}. Lower is better.
As expected, methods tuned for RMSE (SoftImpute, IterativeImputer) excel on this
metric.}
\label{appx:rmse-figs}
\end{figure}

\textbf{Hyper--parameter tuning.}  
Anchor points $\cal W$ are selected from an imputed copy, $A_0$ was chosen as a multiple of the identity. Anchor budget is capped at 65 ($85$ for MI). A two--stage procedure is used. (1) Cross--validation on a self--masking task selects $\mu\in\{1,0.1,0.01,0.001\}$ and initial kernel bandwidth $\eta\in\{2.0,1.6,1.2\}$.  (2) Those values initialise the joint optimisation of $Q,\mathcal{W},\eta$ by alternating minimisation for at most 60 (30 for MI) Newton iterations. 

\textbf{Implementation details.} Experiments were conducted on an institutional HPC cluster equipped with Intel Xeon (Cascade Lake, Skylake, and Broadwell) processors. The total computational time for all experiments was less than one day, no exact runtime accounting was recorded. Newton iterations were computed using the SWM formula, as the datasets were of moderate size.



\subsection{Synthetic toy problems}\label{subsec:toy}
\begin{figure}[t]
  \centering
  \includegraphics[width=0.45\linewidth]{v1/images_toy/2_moon.png}%
  \includegraphics[width=0.45\linewidth]{v1/images_toy/circles.png}\\
  \vspace{4pt}
  \includegraphics[width=0.9\linewidth]{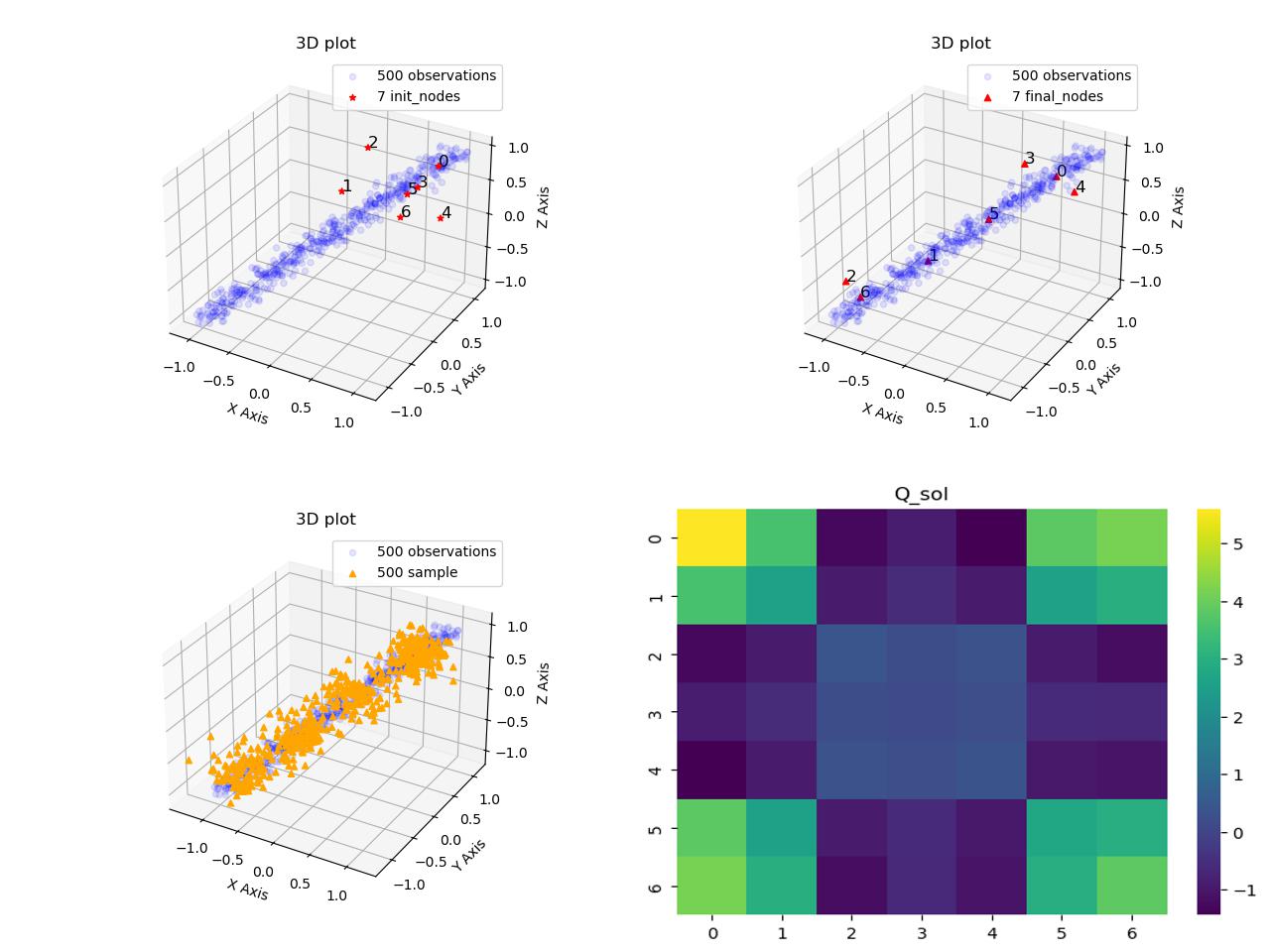}
  \caption{Qualitative results on 2--D (top) and 3--D (bottom) toy datasets.  Top: input points (missing coordinates projected), initial vs final anchor positions, learned density and samples.  Bottom: cylindrical 3--D example---anchors move to informative locations and the learned matrix captures interactions.}
  \label{fig:toy}
\end{figure}

PSD--Impute faithfully reconstructs complex manifolds (moons, concentric rings) and captures latent conditional structure in 3--D despite the absence of fully observed examples.  Failure modes appear only where no full observations constrain the density (white gaps in the outer circle). See Figure \ref{fig:toy}.

\subsection{Real--world datasets}\label{subsec:real}
\begin{table}[t]
\centering
\caption{Summary statistics of real datasets (ordered by score $n/d$).}  
\label{tab:datasets}
\begin{tabular}{|lrrc|}
\hline
\textbf{Dataset} & $n$ & $d$ & $n/d$ \\\hline
Airfoil & 1503 & 5 & 300.6 \\
Blood & 748 & 4 & 187.0 \\
Yeast & 1484 & 8 & 185.5 \\
WineRed & 1599 & 10 & 159.9 \\
ConcreteC & 1030 & 7 & 147.1 \\
ConnectV & 990 & 10 & 148.1 \\
Yacht & 308 & 6 & 51.3 \\
Iris & 150 & 4 & 37.5 \\
Seeds & 210 & 7 & 30.0 \\
Glass & 214 & 9 & 23.8 \\
Breast & 569 & 30 & 19.0 \\
Planning & 182 & 12 & 15.2 \\
ConcreteS & 103 & 7 & 14.7 \\
Wine & 178 & 13 & 13.7 \\
Ionosphere & 351 & 34 & 10.3 \\
Parkinsons & 195 & 23 &  8.48 \\
ConnectS & 208 & 60 & 3.5 \\
\hline
\end{tabular}
\end{table}

\paragraph{Results.}

PSD conditional mean is competitive on high $n/d$ datasets of Table \ref{tab:datasets}; multiple imputation variant excels when more information is available.  On low $n/d$ or intrinsically low‐dimensional data, matrix‐completion baselines such as SoftImpute or OT‐Impute dominate. 
Figure \ref{fig:dist_metrics_ed_OT} present a comparison using the OT metric, Figure \ref{fig:mi_real} shows a study on multiple imputation, and Figure \ref{appx:rmse-figs} reports results in terms of the RMSE.

\begin{figure}[t]\label{fig:energy_distance_mi}
  \centering
  \includegraphics[width=0.48\linewidth]{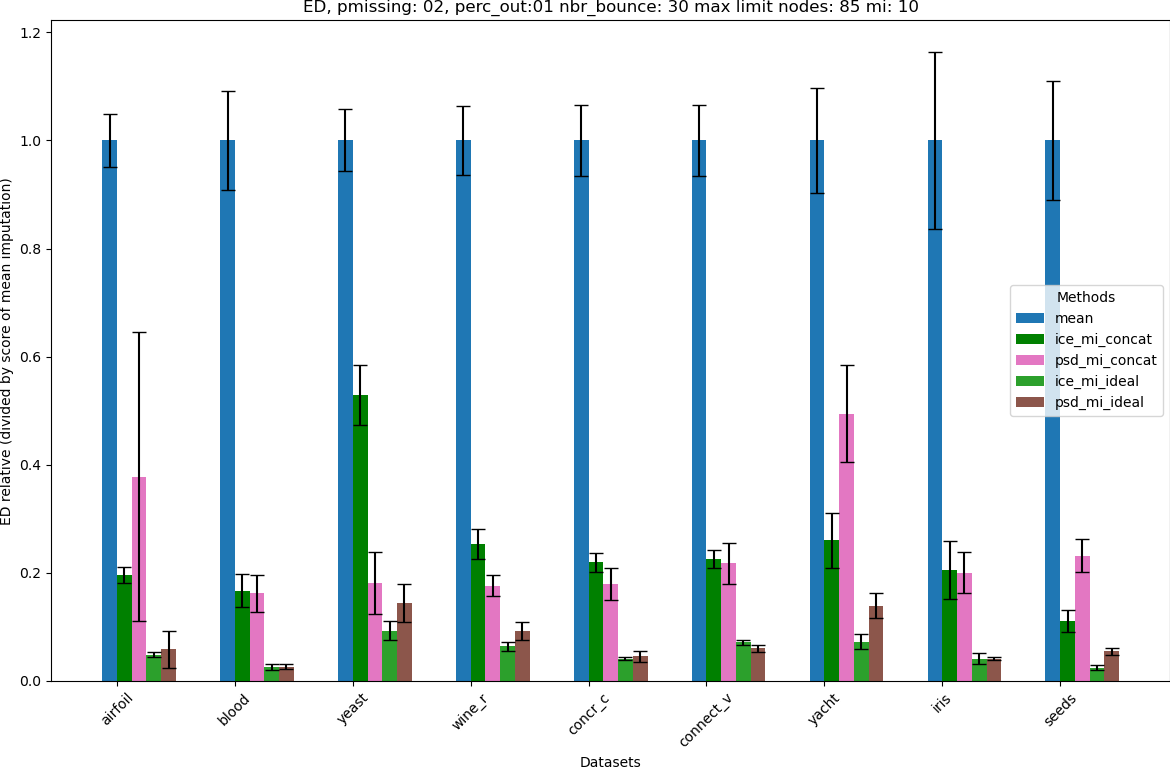}%
  \includegraphics[width=0.48\linewidth]{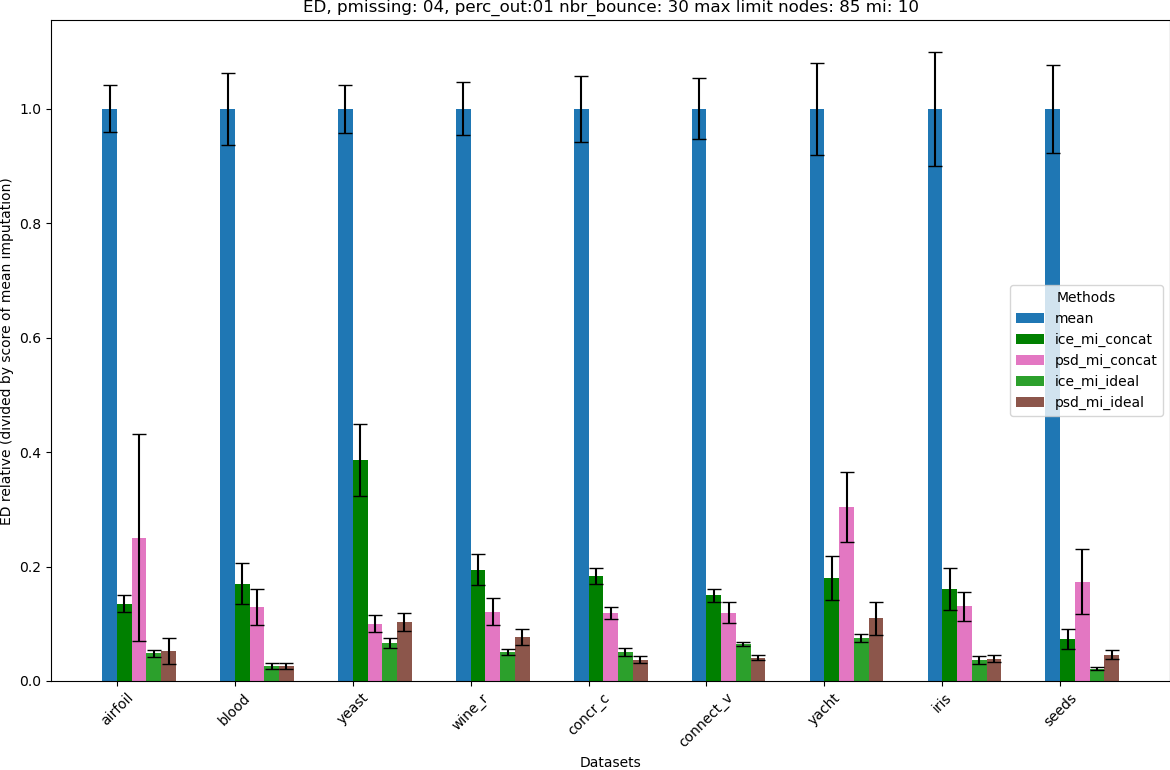}
  \caption{Energy Distance (multiple imputation).  PSD multiple imputation matches or surpasses \texttt{ice\_mi} on datasets with rich structure.}
  \label{fig:mi_real}
\end{figure}


\subsection{Discussion and limitations}\label{subsec:limit}
\begin{itemize}
  \item \textbf{Strengths} – 
  The method has strong theoretical foundation, rooted in the theory of kernels and optimization. The explicit density enable single \emph{and} multiple imputation, competitive ED even without deep nets.
  \item \textbf{Limitations} – The algorithm present many hyperparameters that must be fine tuned, notably the anchors nodes and the precision of the Gaussian. The implementation is on a CPU architecture, and the algorithm suffers the dependence on many hyper‐parameters. At this current stage, the variance across seeds may vary, and the algorithm suffers from scalability issues beyond $d\approx60$.

  \item \textbf{Outlook} – In future work, we plan to extend the code to handle the full range of the regularization parameter $\alpha>0$ of \ref{eq:primal-short}. 
  We can scale the number of anchor points  $\ell$ and the ambient dimension $d$ by leveraging Nyström sketching. 
  We also plan to investigate more efficient hyper‐parameter search strategies, as well as suitable implementations for MNAR extensions.

\end{itemize}

\section{Theoretical Guarantees}
\label{appx:theory}
\subsection{Proof of (Thm.~\ref{thm:exists-barQ})}
\label{appx:proof-exists_barQ}

\begin{proof}
Let ${\cal W}$ be a mesh of $\ell$ points satisfying assumption (iii), then it has a {\em fill distance} $h := \max_{x \in \Omega} \min_{w \in W} \|x-w\|$ bounded by
$$
 h \leq q_1 \left(\frac{\log(\ell/\delta)}{\ell}\right)^{1/d},
$$
with probability $1-\delta$, by Lemma~4 of \cite{rudi2020finding}, where $q_1$ depends only on $d$.
Let $C_1 = 2 d (C/(q_1c'))^{d/2}$ and $C_2 = C$, where $C,c'$ are defined in \cref{thm:exists-barQ-general-h},
by the choice of $\ell$ and the result above, we have that $h \leq \frac{c'}{\sqrt{\eta}\log(2/\epsilon)}$ with probability $1-\delta$, so we can apply \cref{thm:exists-barQ-general-h} obtaining the existence of a $\bar{Q}$ such that $\|p_{\bar{Q}} - p^*\|_{L^2(D)} \leq \epsilon$ and $\tr(\bar{Q}K_\eta) \leq C' \epsilon^{-d/s} \leq C_3 \ell$, with $C_3 = C'/C_1$, where $C'$ is in \cref{thm:exists-barQ-general-h}.
\end{proof}

\subsection{Proof of (Thm.~\ref{thm:exists-barQ-general-h})}
\label{appx:proof-exists-barQ-general_h}
The following is the deterministic result that allows us to prove Theorem (\ref{thm:exists-barQ}).
\begin{theorem}\label{thm:exists-barQ-general-h}
Let $D = (-1,1)^d$ and $\epsilon \in (0, 1/e]$.
Let $f_1,\dots, f_q \in W^s_2(\R^d) \cap L^\infty(\R^d)$ and let $p = \sum_{i=1}^q \tilde{f}_i^2$.
Let ${\cal W}=\{w_i\}_{i=1,..,\ell}$ be a set of points in $T$ with {\em fill distance} $h := \max_{x \in \Omega} \min_{w \in {\cal W}} \|x-w\|$.
Then, for any $\epsilon \in (0, 1/e]$, $\eta \in \R^d_{++}$, by setting
$$\eta = \frac{C}{\epsilon^{2/s}\log\frac{2}{\epsilon}}, \quad h \leq \frac{c'}{\sqrt{\eta}\log(2/\epsilon)},$$
there exists a matrix $\bar{Q} \in \R^{\ell\times \ell}$, $Q \succeq 0$, such that 
$$\tr(\bar{Q} K) \leq C' \epsilon^{-d/s}, \quad \|p_{\bar{Q}} - p\|_{L^2(D)} \leq \epsilon,$$
where $c',C,C'$ depend only on $s, d, \|f_j\|_{W^s_2(\R^d)}, \|f_j\|_{L^\infty(\R^d)}$ for $j=1,\dots,q$. 
\end{theorem}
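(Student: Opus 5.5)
The plan follows the standard PSD-model approximation argument of \cite{rudi2021psd}, in three stages: first build an exact PSD representation of $p$ in the infinite-dimensional RKHS of $k_\eta$, then project it onto the span of the anchor features $\phi(\cdot)$, and finally bound separately the errors from mollification and from projection.

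\textbf{Stage 1: smoothed square roots.} Fix a mollifier $g_\eta$, for instance a Gaussian of width $\eta^{-1/2}$, possibly combined with a higher-order kernel so that it exploits all $s$ derivatives. For each $j$ set $u_j := f_j * g_\eta$. The $W^s_2$ regularity gives
$$
\|u_j - f_j\|_{L^2} \lesssim \eta^{-s/2}\|f_j\|_{W^s_2},
\qquad
\|u_j\|_{L^\infty}\le \|f_j\|_{L^\infty}.
$$
The RKHS norm of $u_j$ in $\mathcal H_\eta$ grows only polynomially, $\|u_j\|^2_{\mathcal H_\eta}\lesssim \eta^{d/2}$, up to factors depending on $\|f_j\|_{W^s_2}$. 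Writing $u_j^2-f_j^2=(u_j-f_j)(u_j+f_j)$ and using the $L^\infty$ bounds yields
$$
\Bigl\|\textstyle\sum_j u_j^2-p\Bigr\|_{L^2(D)}\lesssim \eta^{-s/2}.
$$
In the infinite-dimensional model, the operator $Q_\infty=\sum_j u_j\otimes u_j$ represents $\sum_j u_j^2$ exactly, with trace $\lesssim \eta^{d/2}$.

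\textbf{Stage 2: projection onto the anchors.} Let $P$ be the orthogonal projection in $\mathcal H_\eta$ onto $\mathrm{span}\{k_\eta(\cdot,w_i)\}$, and define $\bar Q$ through $P Q_\infty P$, expressed in the basis $\phi$ as $\bar Q = K^{-1}\bigl(\sum_j \mathbf u_j\mathbf u_j^\top\bigr)K^{-1}$, where $\mathbf u_j=(u_j(w_i))_i$. Then $\bar Q\succeq 0$ and
$$
\tr(\bar QK)=\sum_j\|Pu_j\|^2_{\mathcal H_\eta}\le \sum_j\|u_j\|^2_{\mathcal H_\eta}\lesssim\eta^{d/2}.
$$
With $\eta\asymp \epsilon^{-2/s}/\log(2/\epsilon)$ this gives $\eta^{d/2}\lesssim\epsilon^{-d/s}$, the claimed trace bound.

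The approximation error is controlled via
$$
|p_{\bar Q}-\textstyle\sum_j u_j^2| \le \sum_j |Pu_j-u_j|\,|Pu_j+u_j|.
$$
Each factor is bounded by the standard sampling inequality for Gaussian RKHS on scattered data, as in \cite{rudi2020finding}:
$$
\|(I-P)u\|_{L^\infty(D)}\le C\exp\bigl(-c/(h\sqrt\eta)\bigr)\|u\|_{\mathcal H_\eta}.
$$
The $\eta$ dependence of the constants has to be tracked. Under the assumption $h\le c'/(\sqrt\eta\log(2/\epsilon))$, the exponential factor is at most a power of $\epsilon$, and with $c'$ small enough it absorbs the polynomial factor $\eta^{d/2}$. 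This leaves an $L^2(D)$ error of at most $\epsilon/2$.

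\textbf{Stage 3: choosing constants.} Take $C$ in the choice of $\eta$ so that $\eta^{-s/2}\lesssim \epsilon\,(\log(2/\epsilon))^{s/2}$ is also at most $\epsilon/2$. The logarithmic factor comes from the $\log$ in $\eta$, and it has to be handled either through the choice of $C$ or with a slightly sharper mollification bound.

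\textbf{Main obstacle.} The hardest step is the scattered-data sampling inequality with explicit dependence on $\eta$. One needs constants uniform as $\eta\to\infty$, so that the exponential term beats the $\eta^{d/2}$ growth of the RKHS norms. The first attempt would be to rescale $x\mapsto\sqrt\eta\,x$, which reduces everything to a fixed-bandwidth Gaussian kernel, and then apply the known fixed-bandwidth result on the scaled domain.
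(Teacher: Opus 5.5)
Your Stage 2 is the paper's Steps 2--3. Your $\bar Q=K^{-1}\bigl(\sum_j\mathbf{u}_j\mathbf{u}_j^\top\bigr)K^{-1}$ is exactly the paper's $K_\eta^{-1}B_\alpha K_\eta^{-1}$ with $M_\alpha=\sum_j u_j\otimes u_j$, and the trace bound uses the same projection argument. Your merely exponential sampling rate $e^{-c/(h\sqrt\eta)}$ also suffices in place of the paper's $e^{-t\log t}$, because the assumption on $h$ turns it into $(\epsilon/2)^{c/c'}$.

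The gap is in Stage 1. It is the issue you postpone in Stage 3, and choosing $C$ cannot fix it. Since $\|u\|^2_{\mathcal{H}_\eta}\asymp\eta^{d/2}\int|\hat u(\omega)|^2e^{\|\omega\|^2/(4\eta)}\,d\omega$, your bound $\|u_j\|^2_{\mathcal{H}_\eta}\lesssim\eta^{d/2}$ relies on a mollifier whose frequency response is a fixed profile in $\omega/\sqrt\eta$. Such a mollifier leaves an error of order $\eta^{-s/2}$ for functions in the $W^s_2\cap L^\infty$ ball whose spectrum sits near $\|\omega\|\asymp\sqrt\eta$. Since the constants may depend only on these norms, the bound $\eta^{-s/2}$ is not an artefact. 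At the prescribed $\eta=C\epsilon^{-2/s}/\log(2/\epsilon)$ it equals $C^{-s/2}\epsilon\,(\log(2/\epsilon))^{s/2}$, which exceeds $\epsilon/2$ for small $\epsilon$ whatever $C$ is. As written, your argument proves the statement only for $\eta\asymp\epsilon^{-2/s}$. That requires a fill distance smaller by a factor $\sqrt{\log(2/\epsilon)}$, i.e.\ more anchors in Theorem~\ref{thm:exists-barQ}.

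The missing idea comes from Theorem D.4 of \cite{rudi2021psd}, which the paper uses through $\tr(M_\alpha)\le C_1\tau^{d/2}\bigl(1+\alpha^2e^{C_2\alpha^{-2/s}/\tau}\bigr)$. Mollify at the target frequency $R=a\,\epsilon^{-1/s}$, a factor $\sqrt{\log(2/\epsilon)}$ above the natural scale $\sqrt\eta$. Then pay for the exponential RKHS weight with the Sobolev decay of $\hat f_j$.

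Concretely, take $u_j=f_j*g_R$, where $g_R$ is a Schwartz function whose Fourier transform equals $1$ on $\{\|\omega\|\le R\}$ and vanishes outside $\{\|\omega\|\le 2R\}$. Then $\|u_j-f_j\|_{L^2}\lesssim R^{-s}\|f_j\|_{W^s_2}$ and $\|u_j\|_{L^\infty}\le\|g_1\|_{L^1}\|f_j\|_{L^\infty}$. The map $t\mapsto t/(4\eta)-s\log(1+t)$ is convex, so its maximum on an interval is at an endpoint, which gives
\[
\|u_j\|^2_{\mathcal{H}_\eta}\lesssim\eta^{d/2}\sup_{\|\omega\|\le 2R}\frac{e^{\|\omega\|^2/(4\eta)}}{(1+\|\omega\|^2)^{s}}\,\|f_j\|^2_{W^s_2}\lesssim\eta^{d/2}\bigl(1+\epsilon^{2}e^{R^2/\eta}\bigr).
\]
With $\eta=C\epsilon^{-2/s}/\log(2/\epsilon)$ one gets $\epsilon^2e^{R^2/\eta}=\epsilon^2(2/\epsilon)^{a^2/C}\le 4$ once $C\ge a^2/2$. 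Hence the trace stays $\lesssim\eta^{d/2}\le C'\epsilon^{-d/s}$. This trade-off is exactly why the logarithm sits in the denominator of $\eta$.

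With this Stage 1, the rest of your proof goes through unchanged. $\|u_j\|_{\mathcal{H}_\eta}$ is still polynomial in $1/\epsilon$, so the factor $(\epsilon/2)^{c/c'}$ absorbs it.
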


\begin{proof}
Let ${\cal H}_\eta$ be the Reproducing Kernel Hilbert space associated to the Gaussian kernel $k_\eta$ on $\R^d$ with width $\eta$. Let $\scal{\cdot}{\cdot}$ the associated inner product and for each denote by $k_x \in {\cal H}_\eta$ the associated reproducing vector for the point $x \in \R^d$. See the appendix of \cite{rudi2021psd} for more details.
Let $\alpha = \epsilon/2$ and $\eta = \tau {\bf 1}_d \in \R^d$ for some $\tau > 0$ and $m \in \N$. Let $M_\alpha: {\cal H}_\eta \to {\cal H}_\eta$ be the operator constructed in Thm D.4 of \cite{rudi2021psd} to approximate $p$. Now let ${\cal W} = \{w_1,\dots, w_\ell\}$ be a covering of $T$.
We consider the following model ${p}_\ell = p_{\bar{Q}}$ where 
$$\bar{Q} ~=~ K_{\eta}^{-1} \,B_\alpha\, K_{\eta}^{-1},$$
where $(B_\alpha)_{ij} = \scal{k_{w_i}}{M_\alpha k_{w_j}}$ and ${K_{\eta}}_{ij} = k_\eta(w_i, w_j)$. 

\paragraph{Step 1. Approximation error decomposition}
We will split the approximation error as follows,
$$
\|p_\ell - p\|_{L^2(D)} ~\leq~ \|p_\ell - p_\alpha \|_{L^2(D)} +  \|p_\alpha - p\|_{L^2(D)}. 
$$
Note that for the second term, by Thm D.4, we have
$$
\|p_\alpha - p\|_{L^r(D)} \leq \|p_\alpha - p\|_{L^r(\R^d)} \leq \alpha, \qquad \forall~r \in [1,2]$$

\paragraph{Step 2. Error induced by a projection}
By Thm C.4 of \cite{rudi2021psd} applied to $M_\alpha$, we have that when the fill distance $h$ satisfies $h \leq \sigma/C'$ with $\sigma = \min(1,1/\sqrt{\tau})$, then
$$|p_\alpha(x) - p_{\bar{Q}}(x)| ~\leq~ 2C \sqrt{\|M_\alpha\| p_\alpha(x)}\, e^{-\frac{c\,\sigma}{h}\log\frac{c\,\sigma}{h}}  \,+\, 4C^2\|M_\alpha\|\,e^{-\frac{2c\,\sigma}{h}\log\frac{c\,\sigma}{h}}, \quad \forall x \in D
$$
with $c, C, C'$ depending only on $d$. Now denoting by $a = 2C \sqrt{\|M_\alpha\|} e^{-\frac{c\,\sigma}{h}\log\frac{c\,\sigma}{h}}$ 
we have
$$\|p_\alpha - p_{\bar{Q}}\|_{L^2(D)} \leq a \|\sqrt{p_\alpha}\|_{L^2(D)} + a^2 \|{\bf}1\|_{L^2(D)} \leq 2^{d/2}a^2 + a \|p_\alpha^{1/2}\|^2_{L^2(D)}.$$

\paragraph{Step 3. Bound of $\tr(\bar{Q} K_{\eta})$}
Note that, according Thm C.4 of \cite{rudi2021psd}, we have that $\tr(\bar{Q} K_{\eta}) \leq \tr(M_\alpha)$, since $B_\alpha = Z^* M_\alpha Z$, and $K_{\eta} = Z Z^*$ with $Z$ the linear operator $Z: {\cal H}_\eta \to \R^\ell$ defined as $Z f = (f(w_1),\dots f(w_\ell))$ for any $f \in {\cal H}_\eta$. So, by definition of $K_\eta, B_\alpha$ and the ciclicity of the trace
\begin{align*}
\tr(\bar{Q} K_{\eta}) &= \tr(K_\eta^{-1} Z M_\alpha Z^*) = \tr( Z^* (Z Z^*)^{-1} Z M_\alpha) \\
& \leq \|Z^* (Z Z^*)^{-1} Z\| \tr(M_\alpha) \leq \tr(M_\alpha)
\end{align*}

\paragraph{Step 4. Final bound}
First, note that by Theorem D.4 of \cite{rudi2021psd}
$$
\tr(M_\alpha) \leq  C_1 \tau^{d/2}(1 + \alpha^2\exp(\tfrac{C_2}{\tau} \alpha^{-\frac{2}{s}}))
$$
where $C_1, C_2$ are independent on $\alpha, \tau$ and depend only on $s, d, \|f_i\|_{W^s_2(\R^d)}, \|f_i\|_{L^\infty(\R^d)}$. When $\tau \geq C_2/(2 \alpha^{2/s} \log(1/\alpha))$ we have
$$
\|M_\alpha\| \leq \tr(M_{\alpha}) \leq  C_1 \tau^{d/2}(1 + \alpha^2\exp(\tfrac{C_2}{\tau} \alpha^{-\frac{2}{s}})) \leq  C_3 \alpha^{-d/s}.
$$
with $C_3 = 2^{2-d/2} C_1 C_2^{d/2}$. Then, note that $\|p_\alpha^{1/2}\|_{L^2(D)} = \|p_\alpha\|^{1/2}_{L^1(D)}$, so 
we have
$$
\|p_\alpha\|_{L^1(D)} \leq \|p_\alpha - p\|_{L^1(D)} + \|p\|_{L^1(D)} \leq 1 + \alpha \leq 2.
$$
Set $t=c\sigma/h$. If $t=t_{\alpha}\geq \max(C', (1+\frac{d}{2s}) \log \frac{1}{\alpha} \, + 1+\frac{d}{4} \, + \log(C \sqrt{C_3}) + e)$ then $t\geq e$ and $\log t\geq 1,$ so 
$$
C e^{-\frac{c\,\sigma}{h}\log\frac{c\,\sigma}{h}} = Ce^{-t \log t} \leq Ce^{-t} \leq \tfrac{2^{-d/4}}{15\sqrt{C_3}} \alpha^{1 + \frac{d}{2s}}.
$$
Gathering the results from the previous steps, we have
\begin{align*}
\|p_l - p\|_{L^2(D)} &\leq \alpha  + 4 \|M_\alpha\|^{1/2} C e^{-\frac{c\,\sigma}{h}\log\frac{c\,\sigma}{h}}  ~+~ \|M_\alpha\| C^2 e^{-\frac{c\,\sigma}{h}\log\frac{c\,\sigma}{h}}\\
& \leq \alpha + \tfrac{2^{2-(d/4)}}{15}\alpha + \tfrac{2^{-d/2}}{225}  \alpha^2 \leq 2\alpha.
\end{align*}

The final step is obtained by setting $h=c\sigma/t_{\alpha},$ with $t_{\alpha}\geq C'/c,$ so to get $h\leq \sigma/C',$ and $\tau=\tau_{\alpha}=C_2\alpha^{-2/s}\log(1/\alpha)/2.$ With these choices we satisfy the two conditions required on $\tau$ for Thm C.4, D.4 and the last bound to hold.

\end{proof} 

\subsection{Proof of (Thm.~\ref{thm:approx-error-tildeQ})}
\label{appx:proof-approx-error-tildeQ}
\begin{proof}
Let $\bar{Q}$ be the matrix defined in \cref{thm:exists-barQ}. It satisfies with probability $1-\delta$ the bounds 
$\|p_{\bar{Q}} - p^*\|_{L^2(D)} \leq \epsilon$ and $\tr(\bar{Q} K_\eta) \leq C_3 \ell$ for some constant $C_3$ defined in the theorem.
Since $\tilde{Q}$ is the minimizer of $\tilde{\cal L}$, we have that
$$
 \tilde{\cal L}(\tilde{Q}) \leq\tilde{\cal L}(\bar{Q}) = {\cal L}((1-\nu)p_{\bar{Q}} + \nu u) + \lambda \tr(\bar{Q} K_\eta).
$$
In particular, note that, by the information monotonicity property of KL (see e.g. \cite{cover1999elements}, Theorem 2.5.3 on the chain rule for relative entropy)  we have that for any density $p$ on $D$,
$${\cal L}(p) = \int KL(p^*_S | p_S) d\mu(S) \leq KL(p^* | p),$$
moreover, since $p^* \leq \|p^*\|_{L^\infty(D)} \leq C\|p^*\|_{W^s_2(D)} < \infty$ since $s > d/2$ by assumption, and $\nu > 0$, by applying the {\em reverse Pinsker inequality} \cite{binette2019note}, we have
$${\cal L}((1-\nu)p_{\bar{Q}} + \nu u) \leq KL(p^*||(1-\nu)p_{\bar{Q}}+\nu u) \leq \log(C_2/\nu) \|(1-\nu)p_{\bar{Q}} + \nu u - p^*\|_{L^1(D)},$$
where $C_2 := e V_D \|p^*\|_{L^\infty(D)}$. We conclude noting that
$$\|(1-\nu)p_{\bar{Q}} + \nu u - p^*\|_{L^1(D)} \leq (1-\nu)\|p_{\bar{Q}} - p^*\|_{L^1(D)} + 2\nu,$$
and that $\|\cdot\|_{L^1(D)} \leq V_D^{1/2} \|\cdot\|_{L^2(D)}$. The desired result is obtained by putting $C_1 := 2 V_D^{1/2}$.
\end{proof}

\subsection{Proof of (Thm.~\ref{thm:learn})}
\label{appx:proof_th_learn}
\begin{proof}
Let $\tau = \delta/2$.
By applying \cref{thm:approx-error-tildeQ} with $\epsilon := \nu$ we know that with the given choice of $\ell$ and $\eta$ we have that $\tilde{Q} \in \R^{\ell \times \ell}$, that is a feasible solution of the problem \cref{eq:primal}, satisfies with probability $1-\tau$
\begin{equation} \label{eq:bound-tildeQK_eta}
\tr(\tilde{Q} K_\eta) \leq (1/\lambda) {\tilde{\cal L}(\tilde{Q})} \leq C_3 \ell ~+~ (2C_1/\lambda) \log(C_2/\nu) \nu.
\end{equation}
Since the correct metric to analyze the problem is via a rescaling by the matrix $K_\eta$, we consider the equivalent problem to \cref{eq:primal} where we perform the change of variable $\hat{M} = K_\eta^{1/2} \hat{Q} K_\eta^{1/2}$:
$$ \hat{M} = \min_{M \succeq 0, \tr(M C) = 1 } \Lambda(M) := - \frac{1}{N} \sum_{i=1}^n \log (\tr(M B_i) + \alpha) + \lambda\tr(M),$$
where we defined $\alpha=\nu/V_\Omega$ defined $C = K_\eta^{-1/2} H K_\eta^{-1/2}$ and $B_i = K_\eta^{-1/2} A_i K_\eta^{-1/2}$. Indeed, by the cyclicity of the trace we have
$\tr(MB_i) = \tr(QA_i)$, $\tr(M C) = \tr(Q H)$ and $\tr(M) = \tr(Q K_\eta)$.
So we define also $\tilde{M} = K_\eta^{1/2} \tilde{Q} K_\eta^{1/2}$ and more generally $\tilde{\cal L}(Q) = \Lambda(M)$ where $M = K_\eta^{1/2} Q K_\eta^{1/2}$.
Note that $\tilde{M}$ is feasible for the equivalent problem, due to the feasibility of $\tilde{Q}$, so
$$-\log(\|B_i\|_{op} \tr(\hat{M}) + \alpha) + \lambda \tr(\hat{M}) \leq \Lambda(\hat{M}) \leq \Lambda(\tilde{M}) \leq \log(1/\alpha) +\lambda \tr(\tilde{M}),$$
So, noting  that where $\max_i \|B_i\|_{op} \leq 1$ as proven in \cref{lem:AinvK}, we have
$$ \tr(\hat{M}) \leq \tr(\tilde{M}) ~+~ (1/\lambda)\log(1+ 1/\alpha~\tr(\hat{M})),$$
which implies from Lemma \ref{lemma:W_Lambert_bound} the following   
\begin{align}
    \tr(\hat{M}) 
    &\leq \tr(\tilde{M}) + \frac{1}{\lambda} \log\left(2+ \frac{2}{\alpha} \tr(\tilde{M}) + \frac{2\log(\lambda\alpha)}{\lambda\alpha}\right)\\ 
    &\leq C_3\ell + \frac{C_4}{\lambda}\log\left(C_5\frac{\ell}{\alpha} + 2C_1\frac{\log (C_2/(\lambda\alpha))}{\alpha\lambda}
    \right):=R \label{eq:def-R}
\end{align}

with $C_4 = 2 + C_1 \log (e C_2)$, $C_5 = V_D C_3$, $C_6 = C_5 C_4$,
where in the last step we bounded $\tr(\tilde{M}) = \tr(\tilde{Q}K_\eta)$ form \cref{eq:bound-tildeQK_eta} and the fact that $\epsilon \leq \nu$ so $\log(C_2/\nu)(\epsilon + \nu) \leq \log(e C_2)$.
Now we rewrite the problem so to quantify the generalization error via Rademacher complexity. We define the class of functions
$$f_M(B) := -\log(\tr(M B) + \alpha) + \lambda \tr(M).$$
Now note that $\Lambda(M) = \frac{1}{N}\sum_{i=1}^N f_{M}(B_i) $, for any $M$. So the problem in \cref{eq:primal} is equivalent to 
$$\hat{f} = \min_{f \in {\cal F}_R} \frac{1}{N}\sum_{i=1}^N f(B_i),$$
where ${\cal F}_R = \{f_M ~|~ M \in \R^{\ell \times \ell}, M \succeq 0, \tr(M C) = 1, \tr(M) \leq R \}$, indeed $\frac{1}{N}\sum_{i=1}^N \tilde{f}(B_i)  = \Lambda(\tilde{M})$ and $R$ as defined in \cref{eq:def-R} is large enough that $\tr(\hat{M}) \leq R$, so $\hat{f} = f_{\hat{M}} \in {\cal F}_R$ and also $\tr(\tilde{M}) \leq R$, so $f_{\tilde{M}} \in {\cal F}_R$.

Now we can compute the Rademacher complexity of ${\cal F}_R$.
Note that $f_M(B) = (-\log(\cdot + \alpha) \circ \tr(M \cdot))(B) + \lambda \tr(M K_\eta)$, where $\log(\cdot + \alpha)$ has Lipschitz constant $1/\alpha$ and we are working in a subset of the ball given by the trace norm so the Rademacher complexity of ${\cal F}_R$ is
$$
 {\cal R}({\cal F}_R) \leq \frac{R}{\alpha} \sqrt{\frac{\log(\ell)}{N}}
$$
(see Lemma 26.9 of \cite{shalev2014understanding} for the Lipschitz part and \cite{kakade2012regularization}, Table 1, case $W_{S(1)}$). We can now bound the generalization error (see for example Theorem 26.5 point 3 in Section 26.1 \cite{shalev2014understanding}), obtaining with probability $1-\tau$, 
\begin{align*}
\tilde{\cal L}(\hat{Q}) - \tilde{\cal L}(\tilde{Q}) &\leq \frac{2 R \sqrt{\log \ell}}{\alpha \sqrt{N}} + 5 c \sqrt{\frac{2 \log(8/\tau)}{N}} \\
& \leq \frac{2 R \sqrt{\log \ell}}{\alpha \sqrt{N}} + 5\left(\log\frac{R}{\alpha} +\lambda R \right)  \sqrt{\frac{2 \log(8/\tau)}{N}} \\
& \leq \frac{C}{\nu} \left(\ell + \lambda + \frac{1}{\lambda}\right) \log^{3/2}\left(\frac{\ell\log(1/\lambda\nu)}{\lambda\nu}\right) \sqrt{\frac{\log(8/\tau)}{N}},
\end{align*}
where $c = \sup_{x\in D,S \subseteq \{1,\dots, d\}, f \in {\cal F}_R} |f(K_\eta^{-1/2} A_{x,S} K_\eta^{-1/2})| \leq \log(R/\alpha) + \lambda R$ where we use \cref{lem:AinvK} to bound $\tr(K_\eta^{-1/2} A_{x,S} K_\eta^{-1/2})$ for any $x, S$. The proof is concluded by taking the intersection bound of the two events, the first with probability $1-\tau$ that we required at the beginning of the proof and the second is the generalization bound above.
\end{proof}

\section{Extra Lemmas}
In this part of the appendix we prove some useful lemmas to obtaint the main results of the paper.
\label{appx:lemmas}
\begin{lemma}\label{lem:AinvK}
For any mask $S \subset \{1,\dots,d\}$ and any $x \in D$ define the matrix $A_{x,S}$ as
$$
A_{x,S}  = \left(\circ_{j \in S} \,\phi_j(x_j) \phi_j(x_j)^\top\right) \circ \left(\circ_{j \in \bar{S}} \,H_j\right),
$$
where $\circ$ is the Hadamard product, $\phi_j(z) = (k_\eta(z,w_{1,j}), \dots k_\eta(z,w_{\ell,j}))$ and $w_{i,j}$ is the $j$-th component of the mesh point $w_i \in \R^d$ and $H_j = \frac{1}{V_D}\int_D \phi_j(z) \phi_j(z)^\top dz$.
$$\|K_\eta^{-1/2}A_{x,S} K_\eta^{-1/2}\|_{op} \leq 1.$$
\end{lemma}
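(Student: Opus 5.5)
The plan is to prove the Loewner-order inequality $0 \preceq A_{x,S} \preceq K_\eta$. This suffices: conjugating by $K_\eta^{-1/2}$ preserves the order, so $0 \preceq K_\eta^{-1/2}A_{x,S}K_\eta^{-1/2} \preceq I$, and for a PSD matrix this gives operator norm at most one. Here $K_\eta$ is invertible because the Gaussian kernel is strictly positive definite and the mesh points are almost surely distinct under assumption (ii).

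\textbf{Factorisation.} The isotropic Gaussian kernel factorises over coordinates, $k_\eta(x,w)=\prod_{j=1}^d k_\eta(x_j,w_j)$. Hence
$$K_\eta=\circ_{j=1}^d K_j, \qquad (K_j)_{ab}=k_\eta(w_{a,j},w_{b,j}),$$
with each $K_j$ a one-dimensional Gaussian Gram matrix, so $K_j\succeq 0$. It therefore suffices to compare $A_{x,S}$ and $K_\eta$ factor by factor.

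\textbf{One-dimensional comparison.} Fix $j$ and $z\in\R$. For $v\in\R^\ell$ set $f_v=\sum_a v_a k_\eta(\cdot,w_{a,j})$, which lies in the one-dimensional Gaussian RKHS. The reproducing property and Cauchy--Schwarz give
$$v^\top\phi_j(z)\phi_j(z)^\top v=f_v(z)^2\le \|f_v\|^2\,k_\eta(z,z)=v^\top K_j v,$$
using $k_\eta(z,z)=1$. So $0\preceq\phi_j(z)\phi_j(z)^\top\preceq K_j$.

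Averaging this inequality over $z$ with respect to a probability measure gives $0\preceq H_j\preceq K_j$. This step requires $H_j$ to be a genuine average, i.e.\ the normalisation in $H_j$ must be the reciprocal of the length of the one-dimensional projection of $D$ (namely $1/2$ for $[-1,1]$). I read the $1/V_D$ in the statement in this sense. If it is instead the $d$-dimensional volume $2^d$, the bound only improves, since then $H_j$ is a factor $2^{d-1}\ge 1$ smaller.

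\textbf{Schur product step.} Next I show that if $0\preceq A_j\preceq B_j$ for every $j$, then $\circ_j A_j\preceq\circ_j B_j$. For two factors,
$$B_1\circ B_2-A_1\circ A_2=(B_1-A_1)\circ B_2+A_1\circ(B_2-A_2),$$
and each term is PSD by the Schur product theorem. Induction on the number of factors handles the general case, and the Schur product theorem also gives $\circ_j A_j\succeq 0$.

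Apply this with $A_j=\phi_j(x_j)\phi_j(x_j)^\top$ for $j\in S$ and $A_j=H_j$ for $j\in\bar S$, and $B_j=K_j$ throughout. This yields $0\preceq A_{x,S}\preceq\circ_j K_j=K_\eta$, which completes the argument.

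\textbf{Expected obstacle.} The only genuinely delicate point is the normalisation of $H_j$, which must make it an average of rank-one terms. Everything else is standard positivity bookkeeping.
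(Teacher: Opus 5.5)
Your proof is correct and takes essentially the same route as the paper. The paper also proves $\phi_j(z)\phi_j(z)^\top \preceq K_j$ and $H_j \preceq K_j$ coordinatewise, writing $K_j = Z_jZ_j^*$ and $H_j = Z_j C Z_j^*$ with $\|C\|_{op}\le \tr(C) = 1$, then combines the factors by monotonicity of the Hadamard product to get $A_{x,S}\preceq K_\eta$ and conjugates by $K_\eta^{-1/2}$. Your version also supplies the telescoping proof of the Schur-monotonicity step and makes explicit the normalisation of $H_j$, two points the paper leaves implicit.
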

\begin{proof}
Let ${\cal G}_\eta$ be the reproducing Kernel Hilbert space associated to the Gaussian kernel of width $\eta$ on $\R$. Denote by $k_z \in {\cal G}_\eta$ the reproducing vector associated to $z \in \R$, i.e. $\scal{k_z}{k_{z'}}_{{\cal G}_\eta} = k_\eta(z,z')$. Let $Z_j:{\cal G}_\eta \to \R^\ell$ be the finite rank linear operator defined as $Z_j f = (f(w_{1,j}), \dots, f(w_{\ell,j}))$ for any $f \in {\cal G}_\eta$. Define the kernel matrices $K_j \in \R^{\ell \times \ell}$ as $(K_j)_{p,q} = k_\eta(w_{p,j},w_{q,j})$ for $p,q \in \{1,\dots, \ell\}$. Note that 
$$K_j = Z_j Z_j^*, ~~\phi_j(z) = Z_j k_z,~~H_j = \frac{1}{V_D}\int_D (Z_j k_z)(k_z^* Z_j^*) du(z) = Z_j C Z_j^*,$$
where $C = \frac{1}{V_D}\int_D k_z k_z^* dz:{\cal G}_\eta \to {\cal G}_\eta$ with $\tr(C) = \frac{1}{V_D} \int \|k_z\|^2 dz = 1$. Now note that for any $j \in \{1,\dots,d\}$
$$H_j = Z_j C Z_j^* \preceq \|C\|_{op}Z_j Z_j^*  = K_j,$$
moreover
$$\phi_j(z) \phi_j(z)^\top = Z_j k_z k_z^* Z_j^* ~\preceq~ \|k_z\|^2_{{\cal G}_\eta} Z_j Z_j^* = K_j.$$
Since $A \circ B \preceq C \circ D$ when $A \preceq C$ and $B \preceq D$, we have that
$$A_{x,S} ~\preceq \circ_{j \in \{1,\dots,d\}} Z_j Z_j^* = K_\eta.$$
This implies that $K_\eta^{-1/2} A_{x,S} K_\eta^{-1/2} \preceq I$, so $\|A_{x,S}^{1/2} K_\eta^{-1/2}\|_{op} \leq 1$ and also
$$\tr(A_{x,S} K_\eta^{-1}) = \tr(K_\eta^{-1/2} A_{x,S} K_\eta^{-1/2}) \leq \tr(I) = \ell.$$
\end{proof}

\begin{lemma}\label{lemma:W_Lambert_bound}
Let $x,y,\alpha,\lambda\in\R^{++}$ be strictly positive numbers. If $x\leq y+(1/\lambda)\log(1+x/\alpha),$ then
\begin{align}
    x
    &\leq 2y + 
    \frac{2}{\lambda}\log\frac{1}{\alpha\lambda} +2\alpha
\end{align}
\end{lemma}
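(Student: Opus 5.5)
The plan is to linearise the logarithm with a tangent-line (concavity) bound, so that the implicit inequality $x\le y+(1/\lambda)\log(1+x/\alpha)$ becomes an explicit linear inequality in $x$ that can be solved directly. We avoid the Lambert $W$ function entirely. The only fact needed is the elementary inequality $\log t\le t-1$ for every $t>0$.

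First rewrite $\log(1+x/\alpha)=\log\frac{x+\alpha}{\alpha}$. Then split it with a free scale, chosen so that the linear term has coefficient $1/2$ after multiplying by $1/\lambda$:
\[
\log\frac{x+\alpha}{\alpha}=\log\frac{\lambda(x+\alpha)}{2}+\log\frac{2}{\lambda\alpha}.
\]
Applying $\log t\le t-1$ to the first term with $t=\lambda(x+\alpha)/2>0$ gives
\[
\frac1\lambda\log\Bigl(1+\frac{x}{\alpha}\Bigr)\le \frac{x+\alpha}{2}-\frac1\lambda+\frac1\lambda\log\frac{2}{\lambda\alpha}.
\]

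Substituting this into the hypothesis yields
\[
x\le y+\frac{x}{2}+\frac{\alpha}{2}+\frac{\log 2-1}{\lambda}+\frac1\lambda\log\frac{1}{\lambda\alpha}.
\]
Moving $x/2$ to the left and multiplying by $2$ gives
\[
x\le 2y+\alpha+\frac{2}{\lambda}\log\frac{1}{\alpha\lambda}+\frac{2(\log2-1)}{\lambda}.
\]
Since $\log 2-1<0$, the last term can be dropped. Since $\alpha>0$, we also have $\alpha\le 2\alpha$, which gives exactly the claimed bound.

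No step here is a real obstacle. The only thing to check is that the argument never requires $\log\frac{1}{\alpha\lambda}\ge0$: the tangent-line inequality holds for every $t>0$, so the bound is valid even when $\alpha\lambda>1$ and the logarithmic term is negative. Positivity of $x,y,\alpha,\lambda$ is used only to ensure that $t>0$ and that the logarithms are well defined.
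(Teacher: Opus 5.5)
Your proof is correct. It reaches the bound by a different route from the paper.

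The paper sets $u=-\lambda(x+\alpha)$ and $\varepsilon=\alpha\lambda e^{-\lambda(y+\alpha)}$ and rewrites the hypothesis as $u e^{u}\le-\varepsilon$. It then deduces $\lambda(x+\alpha)\le -W_{-1}(-\varepsilon)$ from the lower real branch of the Lambert $W$ function and invokes L\'oczi's estimate $W_{-1}(t)\ge\frac{e}{e-1}\log(-t)$ on $[-1/e,0)$. Finally it rounds $\frac{e}{e-1}$ up to $2$. That rounding is legitimate only because the quantity multiplied by $\frac{e}{e-1}$ is positive, which follows from $\varepsilon\le 1/e$.

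Your tangent-line argument needs no special functions, no external estimate and no such sign check. It also gives the slightly stronger bound $x\le 2y+\alpha+\frac{2}{\lambda}\log\frac{2}{e\alpha\lambda}$.

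The two proofs are closer than they look. With $v=-W_{-1}(t)\ge 1$, L\'oczi's estimate is equivalent to $\log v\le v/e$, the tangent line of $\log$ at $v=e$. If you run your computation with scale $c\lambda$ for $c\in(0,1)$, you get $x\le\frac{1}{1-c}\bigl(y+c\alpha+\frac{1}{\lambda}\log\frac{1}{ec\alpha\lambda}\bigr)$. The choice $c=1/e$ reproduces exactly the paper's pre-rounding bound $\lambda(x+\alpha)\le\frac{e}{e-1}\bigl(\lambda y+\lambda\alpha+\log\frac{1}{\alpha\lambda}\bigr)$. Its multiplicative constant $\frac{e}{e-1}\approx 1.58$ is smaller than your $2$, but the paper gives that gain back when it rounds to $2$. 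Your choice $c=1/2$ targets the stated constant directly.

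What the Lambert formulation adds is an exact description of the admissible $x$. The lemma is used in the generalisation bound only up to orders of magnitude, so your elementary version is fully sufficient there.
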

\begin{proof}
Let's start from the inequality $x\leq y+(1/\lambda)\log(1+x/\alpha).$ Then
\begin{align}
    \tilde x\leq \frac{y}{\alpha}+1+\frac{1}{\alpha\lambda}\log\tilde x,\quad \tilde x=\frac{x}{\alpha}+1.
\end{align}
Then
\begin{align}
    &\alpha\lambda\tilde x\leq \lambda y+\alpha\lambda+\log\tilde x\\
    &e^{\alpha\lambda\tilde x}\leq e^{\lambda y+\lambda\alpha}\tilde x\\
    &e^{-\lambda y-\lambda\alpha}\leq \tilde xe^{-\alpha\lambda\tilde x}\\
    &-\alpha\lambda e^{-\lambda y-\lambda\alpha}\geq (-\alpha\lambda\tilde x)e^{-\alpha\lambda\tilde x}\\
    &-\alpha\lambda e^{-\lambda y-\lambda\alpha}\geq (-\lambda(x+\alpha))e^{-\lambda(x+\alpha)}\\
\end{align}
denote $\varepsilon=\alpha\lambda e^{-\lambda y-\lambda\alpha}.$ Now, an equation of the form $0>-\varepsilon\geq ye^y\geq -1/e$ has solutions $W_0(-\varepsilon)\geq y\geq W_{-1}(-\varepsilon),$ where $W_0$ and $W_{-1}$ are the two real branches of the Lambert $W-$function. In our case we are interested in an upper bound, so we need to bound $-W_{-1}(-\varepsilon).$ From \cite{loczi2020explicit}, we have that for $t\in[-1/e,0),$ we have $W_{-1}(t)\geq e\log(-t)/(e-1)$ so, since $ye^{y}\in[-1/e,0)$ for $y=-\lambda(x+\alpha),$ we get
\begin{align}
    \alpha\lambda+\lambda x\leq -W_{-1}(-\varepsilon)\leq -\frac{e}{e-1}\log(\varepsilon)=
    \frac{e}{e-1}
    \left(
    \log(1/\alpha) + \log(1/\lambda) +\lambda\alpha +\lambda y
    \right),
\end{align}
that is, observing that $\lambda x\leq \lambda x+\alpha\lambda$ and dividing by $\lambda\alpha$ we get
\begin{align}
    \frac{x}{\alpha}+1
    &\leq \frac{e}{e-1}
    \left(
    \frac{\log(1/\alpha)}{\alpha\lambda} + \frac{\log(1/\lambda)}{\alpha\lambda} + \frac{y}{\alpha}+1
    \right)\\
    &\leq
    2\frac{\log(1/(\alpha\lambda))}{\alpha\lambda} + 2\frac{y}{\alpha}+2.
\end{align}

\end{proof}

\section{Computations and Implementation}
\label{appx:comput}
\subsection{Overview of the section}

In this section we make a summary of the best strategy to solve our optimization problem, based on the number of nodes $r$ and the size of the dataset $n$. 
In general, explicitly forming the Hessian would require $O((\ell^2)^3)=O(\ell^6)$ memory, which can become infeasible in practice despite theoretically motivated.
Moreover, the computation requires the inversion of certain matrices and may be numerically unstable. To address these issues, in Section \ref{appx:comp_grad_hess} we compute the gradient and the Hessian of the loss function, and derive simplifications for solving the Newton system. 
We then present in \ref{sct:direct_method} a direct method based on the Sherman–Morrison–Woodbury formula, enabling efficient computation of Hessian–vector products. In Section \ref{appx:cg-good} we introduce an iterative procedure based on the conjugate gradient method, where we investigate the properties of a preconditioner to mitigate numerical issues. Finally, in Section \ref{appx:hyper_parameters_optimization} we present an extension of the original problem that enables hyperparameter optimization.

\subsection{Details on the Damped Newton Method} \label{eq:interior-damped-Newton}
\begin{center}
\fbox{\begin{minipage}{0.95\linewidth}
\textbf{Algorithm 1: Interior--point damped Newton for~\eqref{eq:primal-short}}\label{alg:newton}\\[2pt]
\emph{Input:} matrices $A_0,\dots,A_N$, barrier schedule $\{\mu_t\}$, tolerance $\varepsilon$.\\
\begin{enumerate}[leftmargin=*]
  \item Initialise $Q\gets I/\ell$ so that $\Tr(QH)=1$.
  \item \textbf{for} $t=0,1,\dots$ \textbf{do}
    \begin{enumerate}[leftmargin=*]
      \item Compute gradient
      and Hessian--vector products 
      \item Solve for Newton direction $\Delta$ with SMW.
      \item Newton decrement $\lambda^2=-\langle  g(Q),\Delta\rangle$. \textbf{if} $\lambda^2/2\le\varepsilon$ \textbf{stop}.
      \item Backtracking line search: start $\alpha=1$, reduce $\alpha\leftarrow\beta\alpha$ until $f(Q+\alpha\Delta)$ satisfies the Armijo rule.
      \item Update $Q\leftarrow Q+\alpha\Delta$ and rescale so that $\Tr(QH)=1$.\\
      \textbf{Anchor \/ hyper-parameter update (every $U$ iters):}\\
      \quad Move anchors $\mathcal W$ towards locations of highest $p_Q$ density or re-sample fully observed points.\\
      \quad Update kernel bandwidths $\eta$ via one Adam step~\citep{kingma2015adam} on the validation likelihood.
    \end{enumerate}
\end{enumerate}
\emph{Output:} optimum $\widehat Q$ and density $p_{\widehat Q}$.\end{minipage}}
\end{center}

\subsection{Computation gradient and Hessian}\label{appx:comp_grad_hess}
In this section we derive the expression for the gradient and the Hessian of the loss function , namely Problem (\ref{eq:primal-short}). These computations are classical, see for example \cite[Appendix A]{boyd2004convex}.
We recall that (\ref{eq:primal-short}) is defined by the loss function
\begin{align}
  f_{\alpha}(Q):=-\frac1N\sum_{i=1}^N\log\Tr(QA_i+\alpha)+\lambda\Tr(QA_0)-\mu\log\det Q.
\end{align}
where $Q$ is a positive semidefinite matrix, such that $Tr(QH)=1,$ with $H$ a problem dependent positive definite matrix. 
Denote as $S_i(Q,\alpha)=-\log(Tr(QA_i)+\alpha)$. Then a Taylor expansion shows that the gradient and the Hessian of $S_i$ can be written as
\begin{align}
    \nabla_QS_i(Q,\alpha)
    &=-\frac{A_i}{Tr(QA_i)+\alpha},\quad\nabla^2_QS_i(Q,\alpha)
    =\frac{A_i\otimes A_i}{(Tr(QA_i)+\alpha)^2},
\end{align}
where $(A_i\otimes A_i)(E)=A_iTr(EA_i).$

So the final expression for the gradient and the Hessian of $f_{\alpha}$ will be
\begin{align}\label{eq:grad_hess_alpha}
    g_{\alpha,Q}&=-\frac{1}{N}\sum_{i=1}^N\frac{A_i}{Tr(QA_i)+\alpha} + \lambda A_0-\mu Q^{-1},\\
    G_{\alpha,Q}&=\frac{1}{N}\sum_{i=1}^N\frac{A_i\otimes A_i}{(Tr(QA_i)+\alpha)^2} +\mu Q^{-1}\otimes_K Q^{-1}.
\end{align}
Observe that
\begin{align}\label{eq:hess_alpha_times_Q}
    G_{\alpha,Q}(Q)
    &=\frac{1}{N}\sum_{i=1}^N\frac{Tr(QA_i)A_i}{(Tr(QA_i)+\alpha)^2} + \mu Q^{-1}\\
    &=\frac{1}{N}\sum_{i=1}^N\frac{A_i}{Tr(QA_i)+\alpha} + \mu Q^{-1} 
    -\alpha\tilde A(\alpha),\quad 
    \tilde A(\alpha)=\frac{1}{N}\sum_{i=1}^N\frac{A_i}{(Tr(QA_i)+\alpha)^2}\\
    &=-g_{Q,\alpha}+\lambda A_0 -\alpha\tilde A(\alpha)= -g_{Q,\alpha}-B_{\lambda,\alpha},\quad B_{\lambda,\alpha}=-\lambda A_0 +\alpha\tilde A(\alpha)
\end{align}
so we have an expression for the Hessian-gradient product.
\begin{align}\label{eq:inv_hess_grad}
    G^{-1}_{\alpha,Q}(-g_{Q,\alpha})=Q-\lambda G^{-1}_{\alpha,Q}(A_0)+\alpha G^{-1}_{\alpha,Q}(\tilde A(\alpha))
\end{align}

we can write the Newton step as
\begin{align}\label{eq:newton_step_and_decrement_our_problem_up}
    \Delta Q&=G^{-1}_Q(-g_Q)-\nu_QG_Q^{-1}(H),\\
    \nu_Q&=-\frac{\langle G^{-1}_Q(H),-g_Q\rangle_{\mathcal{S}_d}}{-\langle G^{-1}_Q(H),H\rangle_{\mathcal{S}_d}}\label{eq:second_eq},
\end{align}
and the Newton decrement will be $\lambda_Q^2=-Tr(g_Q\Delta Q).$
Now we can efficiently write the Newton step and the Newton decrement.

\begin{theorem}\label{th:nt_step_nt_decr}
Define $B_{\lambda,\alpha}=-\lambda A_0+\alpha\tilde A(\alpha).$
Then the Newton step is defined by the equations \begin{align}\label{eq:first_eq_th_NS}
&\Delta Q = Q+G_{\alpha,Q}^{-1}(B_{\lambda,\alpha})- \nu(Q)G^{-1}_{\alpha,Q}(H),\\
&\nu(Q)=
-\frac{1+\langle G^{-1}_Q(H), B_{\lambda,\alpha}\rangle_{\mathcal{S}_d} }{-\langle G^{-1}_Q(H), H\rangle_{\mathcal{S}_d}}.\label{eq:sec_equation_th_ND_2}
\end{align}
Moreover, the Newton decrement can be written as 
\begin{align}
    \lambda(Q)^2&=c_{\alpha,Q}+\ell\mu + Tr(QB_{\lambda,\alpha})-\nu_Q+ Tr(B_{\lambda,\alpha}\Delta Q),
\end{align}
where $\ell$ is the size of the matrix $A_0,$ and $c_{\alpha,Q}\in(0,1]$ is bounded.
\end{theorem}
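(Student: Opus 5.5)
The plan is to use the identity \eqref{eq:hess_alpha_times_Q}, $G_{\alpha,Q}(Q)=-g_{\alpha,Q}-B_{\lambda,\alpha}$, to remove every appearance of the gradient. Everything is then expressed through $Q$, $B_{\lambda,\alpha}$, $H$, and the inverse Hessian applied to $B_{\lambda,\alpha}$ and $H$.

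Since $G_{\alpha,Q}$ is a self-adjoint positive definite operator on symmetric matrices (for $\mu>0$, because of the $\mu Q^{-1}\otimes_K Q^{-1}$ term), applying $G_{\alpha,Q}^{-1}$ to the identity gives
\[
G_{\alpha,Q}^{-1}(-g_{\alpha,Q})=Q+G_{\alpha,Q}^{-1}(B_{\lambda,\alpha}).
\]
This is \eqref{eq:inv_hess_grad}. Substituting it into the equality-constrained Newton step \eqref{eq:newton_step_and_decrement_our_problem_up} gives \eqref{eq:first_eq_th_NS} at once.

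For the multiplier, I take the numerator of \eqref{eq:second_eq} and move $G^{-1}_{\alpha,Q}$ across the inner product by self-adjointness:
\[
\langle G^{-1}_{\alpha,Q}(H),-g_{\alpha,Q}\rangle=\langle H,Q+G^{-1}_{\alpha,Q}(B_{\lambda,\alpha})\rangle=\Tr(QH)+\langle G^{-1}_{\alpha,Q}(H),B_{\lambda,\alpha}\rangle .
\]
Feasibility gives $\Tr(QH)=1$, so this equals $1+\langle G^{-1}_{\alpha,Q}(H),B_{\lambda,\alpha}\rangle$. That is \eqref{eq:sec_equation_th_ND_2}, with the sign conventions kept exactly as in \eqref{eq:second_eq}.

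For the decrement $\lambda(Q)^2=-\Tr(g_{\alpha,Q}\Delta Q)$, I again write $-g_{\alpha,Q}=G_{\alpha,Q}(Q)+B_{\lambda,\alpha}$. This gives
\[
\lambda(Q)^2=\langle G_{\alpha,Q}(Q),\Delta Q\rangle+\Tr(B_{\lambda,\alpha}\Delta Q).
\]
By self-adjointness the first term equals $\langle Q,G_{\alpha,Q}(\Delta Q)\rangle$. The Newton system behind \eqref{eq:newton_step_and_decrement_our_problem_up} reads $G_{\alpha,Q}(\Delta Q)=-g_{\alpha,Q}-\nu_Q H$, so with $\Tr(QH)=1$ this term becomes $-\Tr(g_{\alpha,Q}Q)-\nu_Q$.

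It remains to expand $-\Tr(g_{\alpha,Q}Q)$ using \eqref{eq:grad_hess_alpha}. Write $t_i=\Tr(QA_i)\ge 0$. Then
\[
-\Tr(g_{\alpha,Q}Q)=\tfrac1N\sum_i \tfrac{t_i}{t_i+\alpha}-\lambda\Tr(QA_0)+\mu\,\Tr(Q^{-1}Q),
\]
and $\mu\Tr(Q^{-1}Q)=\mu\ell$.

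Next I add and subtract $\Tr(QB_{\lambda,\alpha})=-\lambda\Tr(QA_0)+\alpha\,\tfrac1N\sum_i t_i/(t_i+\alpha)^2$. The $-\lambda\Tr(QA_0)$ terms match, and the remainder is
\[
c_{\alpha,Q}=\tfrac1N\sum_i\Bigl(\tfrac{t_i}{t_i+\alpha}-\tfrac{\alpha t_i}{(t_i+\alpha)^2}\Bigr)=\tfrac1N\sum_i\tfrac{t_i^2}{(t_i+\alpha)^2}.
\]
This is an average of numbers in $[0,1)$, so $c_{\alpha,Q}\le 1$. It is strictly positive as soon as some $t_i>0$, which holds at any interior $Q\succ0$ because $A_i\succeq0$ and $A_i\neq0$. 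Collecting the terms gives the stated formula for $\lambda(Q)^2$.

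The main obstacle is bookkeeping rather than any deep step, and I expect two places to need care. First, the sign convention for $\nu_Q$ must be used consistently across \eqref{eq:newton_step_and_decrement_our_problem_up}, the KKT relation $G(\Delta Q)=-g-\nu H$, and \eqref{eq:sec_equation_th_ND_2}. I will fix it by rederiving the step from the KKT system for $\min_\Delta\, \langle g,\Delta\rangle+\tfrac12\langle \Delta,G\Delta\rangle$ subject to $\Tr(\Delta H)=0$. Second, self-adjointness of $G_{\alpha,Q}^{-1}$ in the trace inner product must be justified; this follows because each summand $A_i\otimes A_i$ and $Q^{-1}\otimes_K Q^{-1}$ is symmetric.
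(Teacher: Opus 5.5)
Your proposal is correct and follows essentially the same route as the paper. You substitute $-g_{\alpha,Q}=G_{\alpha,Q}(Q)+B_{\lambda,\alpha}$ into the constrained Newton step and the multiplier, then split the decrement into $\langle G_{\alpha,Q}(Q),Q\rangle+\Tr(QB_{\lambda,\alpha})-\nu_Q+\Tr(B_{\lambda,\alpha}\Delta Q)$ using self-adjointness and $\Tr(QH)=1$. The only differences are that you go through the KKT equation $G_{\alpha,Q}(\Delta Q)=-g_{\alpha,Q}-\nu_QH$ rather than expanding $\Delta Q$ directly (a cosmetic change), and that you justify self-adjointness and $c_{\alpha,Q}\in(0,1]$, which the paper only asserts. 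One small correction: the summands lie in $[0,1)$ only for $\alpha>0$, while for $\alpha=0$ they equal $1$, which still gives $c_{\alpha,Q}\le 1$.
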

\begin{proof}
Equation (\ref{eq:first_eq_th_NS}) follows by plugging (\ref{eq:inv_hess_grad}) in (\ref{eq:newton_step_and_decrement_our_problem_up}).
For the second equation observe that
\begin{align}
    \nu_Q&=-\frac{\langle H,G^{-1}_Q(-g_Q)\rangle_{\mathcal{S}_d}}{-\langle G^{-1}_Q(H),H\rangle_{\mathcal{S}_d}}
    =-\frac{\langle H, Q+G^{-1}_Q(B_{\lambda,\alpha}) \rangle_{\mathcal{S}_d}}{-\langle G^{-1}_Q(H) ,H\rangle_{\mathcal{S}_d}}
    =-\frac{1+\langle G_Q^{-1}(H), B_{\lambda,\alpha} \rangle_{\mathcal{S}_d}}{-\langle G^{-1}_Q(H) ,H\rangle_{\mathcal{S}_d}},
\end{align}
as we wanted.
For the Newton decrement, we have
For the Newton decrement, we have
\begin{align}
    \lambda_Q^2&=\langle-g_Q,\Delta Q\rangle\\
    &=\langle G_Q(Q) + B_{\lambda,\alpha}  , \Delta Q   \rangle  \\
    &=\langle G_Q(Q),Q+G_Q^{-1}(B_{\lambda,\alpha} -\nu_QH)\rangle  +\langle B_{\lambda,\alpha}  , \Delta Q   \rangle\\
    &=\langle G_Q(Q),Q\rangle
    +\langle Q,B_{\lambda,\alpha}-\nu_QH)\rangle  +\langle B_{\lambda,\alpha} , \Delta Q   \rangle,
\end{align}

From (\ref{eq:hess_alpha_times_Q}) we immediately get that 
\begin{align}\label{eq:Hess_times_Q_Q_alpha}
    \langle G_Q(Q),Q\rangle=\frac{1}{N}\sum_{i=1}^N\frac{Tr(QA_i)^2}{(Tr(QA_i)+\alpha)^2} +\mu \ell =c_{\alpha,Q}+\mu\ell 
\end{align}
so the final expression for $\lambda^2_Q$ is
\begin{align}
     \lambda_Q^2
    &=\langle G_Q(Q),Q\rangle
    +\langle Q,B_{\lambda,\alpha} -\nu_QH)\rangle  +\langle B_{\lambda,\alpha}  , \Delta Q   \rangle\\
    &=c_{\alpha,Q}+r\mu + Tr(QB_{\lambda,\alpha})-\nu_Q+ Tr(B_{\lambda,\alpha}\Delta Q),
\end{align}
as we wanted.

\end{proof}







\subsection{Direct method}\label{sct:direct_method}
To be able to perform the computation of $\Delta Q$ and $\lambda^2_Q,$ we need to be able to invert the Hessian matrix. To do that, we can leverage the SWM formula. In fact, define
\begin{align}
    A_{Q,\alpha}&: \mathcal{S}_p\to \R^n,\quad 
    A_{Q,\alpha}(W) = \frac{1}{\sqrt{ N}}\left(
    \frac{Tr(A_1W)}{Tr(A_1Q)+\alpha},..,\frac{Tr(A_nW)}{Tr(A_nQ)+\alpha}
    \right)^T\\
    A_{Q,\alpha}^*&:\R^n\to\mathcal{S}_p,\quad A^*_{Q,\alpha}(x)=\frac{1}{\sqrt N}\sum_{i=1}^nx_i\frac{A_i}{Tr(A_iQ)+\alpha}
\end{align}
Then $G_Q= A_{Q,\alpha}^*A_{Q,\alpha} + \mu Q^{-1}\otimes Q^{-1},$ and the explicit expression for $G^{-1}_{Q,\alpha}$ is
\begin{align}
    \mu G_{Q,\alpha}^{-1} = Q\otimes Q - (Q\otimes Q)A^*(\mu Id_N + A(Q\otimes Q)A^*)^{-1}A(Q\otimes Q),\quad A=A_{Q,\alpha}
\end{align}
This expression provides a direct algorithm for the computation of the Newton step, following Theorem \ref{th:nt_step_nt_decr}.

The cost of the multiplication of the inverse of the Hessian with a matrix is dominated by making up the matrix $A (Q\otimes Q) A^*$, that is $O(\ell^3n),$ and the cost of inverting $\mu Id + A(Q\otimes Q) A^*,$ that is $O(n^3).$ The number of iterations needed to produce a solution within an accuracy $\varepsilon$ is in practice proportional to the gap between the value of the loss function at the starting point and the optimal value, see \cite[Chapter 10.2]{boyd2004convex}.
In the end, the cost of running Newton method and computing the Newton step is $O((L(Q_0)-L(Q^*))\cdot(n\ell^3 + n^3)),$ where $Q_0$ is the starting point and $Q^*$ is the minimizer of the functional. The cost in memory is dominated by storing the $N$ matrices $A_i,$ and the matrix $\mu Id_N+A(Q\otimes Q)A^*$, that is $N\times N.$ The total memory complexity is so dominated by $O(N^3+N\ell^2).$

\subsection{Preconditioning the Hessian system}
\label{appx:cg-good}
In this section we investigate the preconditioning property of $Q\otimes Q.$ Let's first review the fundamental notions of preconditioning. Let $Ax=b$ a linear system, with $x$ the unknown and $A$ a positive symmetric matrix.
We recall, from \cite[Chapter 5, pag. 118]{wright2006numerical}, that preconditioning is a change of variable from $x$ to $\hat x$ via a nonsingular matrix $C,$ that is $\hat x=Cx,$ and the system to be solved will be $(C^{-T}AC^{-1})\hat{x}=C^{-T}b.$ 
The idea is to find a matrix $C$ such that the preconditioned matrix $C^{-T}AC^{-1}\sim Id.$
One way to assess the stability of a linear system and make it amenable to powerful iterative methods that scale to large datasets, like Conjugate Gradient, is through the condition number \cite[pag.117]{wright2006numerical}, defined as the ratio between the largest and smallest eigenvalue of a matrix $A$, that is $k(A)=\lambda_{max}(A)/\lambda_{min}(A).$
Recall that the Hessian of the loss function is
\begin{align}
    \nabla^2 g(Q)=\frac{1}{n}\sum_{i=1}^n\frac{A_i\otimes A_i}{(Tr(QA_i)+\alpha)^2} + \mu Q^{-1}\otimes Q^{-1}
\end{align}
Here we see $A_i\otimes A_i$ as a rank one operator, that is $(A_i\otimes A_i)(H)=Tr(A_iH)A_i.$
Consider the change of variable $C_Q=(Q^{1/2}\otimes Q^{1/2}).$
Then 
\begin{align}\label{eq:change_of_variable_Hess}
    C^{-T}_Q\nabla^2 g(Q)C_Q^{-1} 
    &= \frac{1}{n}\sum_{i=1}^n\frac{(Q^{1/2}A_iQ^{1/2})\otimes (Q^{1/2}A_iQ^{1/2})}{(Tr(QA_i)+\alpha)^2} + \mu Id\\
    &=L_{Q,\alpha} + \mu Id,
\end{align}
where $L_Q$ is the sum term in (\ref{eq:change_of_variable_Hess}).
We need to bound the biggest eigenvalue of $L_Q,$ to do that we use the variational characterization of eigenvalues, see for example \cite[Chapter 3.1]{bhatia2013matrix}. Let $H\in\mathcal{S}_d$ be a symmetric matrix such that $||H||_{\mathcal{S}_d}=Tr(H^2)^{1/2}=1$. Denote as $B_{i,Q}=Q^{1/2}A_iQ^{1/2}$
Then 
\begin{align}
    \langle L_Q(H),H\rangle_{\mathcal{S}_d}=
    \frac{1}{n}\sum_{i=1}^n
    \frac{Tr(B_{i,Q}H)^2}{(Tr(QA_i)+\alpha)^2}
    \leq 
    \frac{1}{n}\sum_{i=1}^n\frac{Tr(B_{i,Q}^2)}{(Tr(B_{i,Q})+\alpha)^2}\leq 1,
\end{align}
where we have used in the last inequality that for every positive semidefinite matrix $B,$ it holds $Tr(B^2)\leq Tr(B)^2.$ Then it follows that the condition number of the preconditioned matrix (\ref{eq:change_of_variable_Hess}) is
\begin{align}\label{eq:bound_cond_number}
    k(L_Q+\mu Id)=
    \frac{\lambda_{max}(L_Q+\mu Id)}{\lambda_{min}(L_Q+\mu Id)}\leq\frac{1+\mu}{\lambda_{min}(L_Q)+\mu }\leq \frac{1+\mu}{\mu}\leq 1+\frac{1}{\mu}.
\end{align}
Observe that in practice, we do not need to compute the matrix $Q^{1/2}$, because the preconditioning algorithm works with the full matrix $Q$, see \cite[pag.118]{wright2006numerical}.
The conjugate gradient method is known to reach an $\varepsilon$-accurate solution of the linear system in a number of iterations $K$ proportional to the square root of the conditioning number, see \cite[Pag.117]{wright2006numerical}. So, from (\ref{eq:bound_cond_number}) we obtain that the number of steps $K$ needed is bounded by $K=O(\log(1/\varepsilon)\sqrt{1+1/\mu}).$ Since the cost of a Hessian-matrix product is on the order of $O(\ell^3+\ell^2n),$ the total cost to obtain a $\varepsilon-$solution will be $O((\ell^3+\ell^2n)\log(1/\varepsilon)\sqrt{1+1/\mu}).$

\subsection{Hyper-parameters optimization}
\label{appx:hyper_parameters_optimization}
What we have presented assumes the anchor points $X$ and the precision of the Gaussian $\eta$ to be known apriori, but this is in general not possible. This is a common problem for example in the context of Gaussian processes \cite{snelson2005sparse} and mixture models \cite{bilmes1998gentle}. To overcome this issue, we extend our problem, and we consider
\begin{align}\tag{$P_E$}\label{argmin:extended_argmin}
\argmin_{\substack{
    Q \succcurlyeq 0, 
    \\X \in \mathbb{R}^{r \times d}, 
    \eta \in \mathbb{R}^d_{+} \\
    \mathrm{Tr}(QH(X, \eta)) = 1
}}
L(Q,X,\eta)=&-\frac{1}{N}\sum_{i=1}^N\log Tr(Q\cdot A_i(X,\eta)+\alpha) \\
& +\lambda Tr(Q\cdot A_0(X,\eta))
-\mu\log |Q| \nonumber,
\end{align}
where we have made explicit the dependence of the matrices on the nodes $X$ and the precision $\eta.$
We need to find a way to remove the trace constrained. The key observation is the following. Let $(X,\eta)$ be fixed hyperparameters. We can find a matrix $Q^*=Q(X,\eta)$ that is the solution of $L(\cdot,X,\eta).$ Since this point will be in the interior of the positive definite cone, $KKT$ conditions must hold \cite[Chapter 5.5.3]{boyd2004convex}, so $Tr(Q^*H(X,\eta))=1,$ and
\begin{align}
    \nabla_QL(Q^*,X,\eta) + \omega^*H(X,\eta)=0.
\end{align}
Since the problem is convex and the constraint is linear, $Q^*$ and $\omega^*$ are primal and dual optimal, with zero duality gap. 
The dual variable $\omega^*$ can be computed in closed form. In fact, from (\ref{eq:grad_hess_alpha}) we deduce that $\langle g_Q,Q\rangle=-b_{\alpha,Q}+\lambda Tr(A_0Q)-\mu \ell,$ with $b_{\alpha,Q}=(1/N)\sum_{i=1}^NTr(QA_i)/(Tr(QA_i)+\alpha)$ and computing the scalar product with $Q^*$ and rearranging the terms we get
\begin{align}
    -b_{\alpha,Q^*} + \lambda Tr(Q^*A_0) -\mu\ell + \omega^*=0 \iff 
    \omega^*=\omega(X,\eta)=b_{\alpha,Q(X,\eta)}+\mu \ell -\lambda Tr(Q(X,\eta)A_0).
\end{align}
Now, define the problem 
\begin{align}\tag{$\tilde P$}\label{argmin:unconstrained_problem}
    \argmin_{\substack{
    Q \succcurlyeq 0, \\X \in \mathbb{R}^{r \times d}, 
    \eta \in \mathbb{R}^d_{+}
}}\tilde L(Q,X,\eta)=L(Q,X,\eta) + \omega(X,\eta)(Tr(QH_{X,\eta})-1).
\end{align}
The function $\tilde L$ for $(X,\eta)$ fixed is nothing more than the Lagrangian of the original problem. Observe that $Q(X,\eta)$ minimize $\tilde L(\cdot,X,\eta).$
It comes out that minimizig $L$ and $\tilde L$ is the same.
\begin{theorem}\label{th:equality_constr_and_unconstr_problem}
We have that problem (\ref{argmin:extended_argmin}) and problem (\ref{argmin:unconstrained_problem}) have the same minimum value, that is 
\begin{align}    
    \min_{
    \substack{
    Q\succcurlyeq 0,X,\eta \\
    Tr(QH_{X,\eta}) = 1}} L(Q,X,\eta) = 
    \min_{Q\succcurlyeq 0,X,\eta}
    \tilde{L}(Q,X,\eta)
\end{align}
\end{theorem}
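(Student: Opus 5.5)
The plan is to prove the two inequalities between the optimal values separately, reducing everything to a statement at fixed hyperparameters $(X,\eta)$. Write $m_E$ for the minimum in \eqref{argmin:extended_argmin} and $\tilde m$ for the minimum in \eqref{argmin:unconstrained_problem}. The key structural fact is that, for fixed $(X,\eta)$, the multiplier $\omega(X,\eta)$ is a constant, not a function of $Q$. So $\tilde L(\cdot,X,\eta)$ is exactly the Lagrangian $L(\cdot,X,\eta)+\omega^*(\Tr(QH)-1)$ of the convex, linearly constrained inner problem, evaluated at the optimal dual variable $\omega^*$.

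\textbf{Step 1 (inner strong duality).} Fix $(X,\eta)$. The inner problem minimises the convex function $L(\cdot,X,\eta)$ (log terms, a linear term, and the barrier $-\mu\log\det Q$) over the open PD cone, subject to one linear equality. The minimiser $Q^*=Q(X,\eta)$ lies in the interior because of the barrier. By the KKT conditions recalled above, $\nabla_Q L(Q^*)+\omega^* H=0$. Hence $Q^*$ is a stationary point of the convex function $\tilde L(\cdot,X,\eta)$ on the open cone, and therefore a global minimiser of it. Since $\Tr(Q^*H)=1$, we get
\[
\min_{Q\succ0}\tilde L(Q,X,\eta)=\tilde L(Q^*,X,\eta)=L(Q^*,X,\eta)=\min_{Q\succ0,\,\Tr(QH)=1}L(Q,X,\eta).
\]
I would also check that $\omega^*$ from the closed-form formula coincides with the true KKT multiplier. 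Pairing the stationarity equation with $Q^*$ and using $\Tr(Q^*H)=1$ gives exactly $\omega^*=b_{\alpha,Q^*}+\mu\ell-\lambda\Tr(Q^*A_0)$, which is the formula in the text.

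\textbf{Step 2 (outer minimisation).} Define $\varphi(X,\eta)$ as the common value from Step 1. Then
\[
m_E=\inf_{X,\eta}\;\min_{Q:\,\Tr(QH)=1}L=\inf_{X,\eta}\varphi(X,\eta),
\qquad
\tilde m=\inf_{X,\eta}\;\min_{Q}\tilde L=\inf_{X,\eta}\varphi(X,\eta).
\]
These are equal, and minimisers correspond: if $(X^\dagger,\eta^\dagger)$ attains the outer infimum, then $(Q(X^\dagger,\eta^\dagger),X^\dagger,\eta^\dagger)$ attains both problems. The direction $\tilde m\le m_E$ is also immediate without Step 1, since $\tilde L=L$ on the feasible set.

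\textbf{Main obstacle.} The delicate point is the inner problem: it must actually attain its minimum at an interior point for every $(X,\eta)$. This requires coercivity on the slice $\{\Tr(QH)=1\}$. Since $H\succ0$, that slice is compact within the PSD cone, and $-\mu\log\det Q\to+\infty$ at its boundary, so a minimiser exists and is PD. The constraint is linear with relative interior points (e.g. $Q=I/\Tr(H)$), so Slater holds and KKT is necessary and sufficient.

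A subtlety is that $\tilde L(\cdot,X,\eta)$ could in principle be unbounded below on the whole cone if $\omega^*$ were negative, since the linear term would then drive $\Tr(QH)\to\infty$. This is ruled out by convexity: a stationary point of a convex function is a global minimiser, so no unboundedness can occur. If outer minimisers do not exist, the statement should be read with infima, and the argument above is unchanged.
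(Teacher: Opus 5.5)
Your proof is correct and follows the paper's argument. One direction uses $\tilde L = L$ on the feasible set. The other uses that $Q(X,\eta)$ is feasible and minimises the Lagrangian $\tilde L(\cdot,X,\eta)$, so $\tilde L(Q,X,\eta)\ge \tilde L(Q(X,\eta),X,\eta)=L(Q(X,\eta),X,\eta)$. You also justify what the paper only asserts: existence of an interior inner minimiser (compactness of the slice when $H\succ0$, plus the log-det barrier), and that a stationary point of the convex function $\tilde L(\cdot,X,\eta)$ is a global minimiser. Reading the outer minimum over $(X,\eta)$ as an infimum is also appropriate.
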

\begin{proof}
Denote as $(L)$ the minimum on the LHS and $(R)$ the minimum on the RHS. If $Q,X,\eta$ is a triple as in the LHS, then   $L(Q,X,\eta)=\tilde L(Q,X,\eta)\geq (R).$ For the opposite direction, fix $Q,X,\eta$ as in the right hand side. Then 
\begin{align}
    \tilde L(Q,X,\eta)\geq \tilde L(Q(X,\eta),X,\eta)=L(Q(X,\eta),X,\eta)\geq (L),
\end{align}
where we have used in the first inequality that $Q(X,\eta)$ minimizes the Lagrangian of the system, and in the equality we have used that $Tr(Q(X,\eta)H_{X,\eta})=1.$
\end{proof}

Now, Theorem (\ref{th:equality_constr_and_unconstr_problem}) allows us to perform alternating minimization \cite{byrne2013alternating}. If $Q$ is feasible for (\ref{argmin:extended_argmin}), the derivative of $\tilde L$ with respect to $(X,\eta)$ is
\begin{align}
    \nabla_{X,\eta}\tilde{L}(Q,X,\eta)
    &=\nabla_{X,\eta}L(Q,X,\eta) + \nabla_{X,\eta}\omega(X,\eta)(Tr(QH_{X,\eta})-1)) + \omega(X,\eta)\nabla_{X,\eta}Tr(QH_{\eta}) \\
    &=\nabla_{X,\eta}L(Q,X,\eta) + \omega(X,\eta)\nabla_{X,\eta}Tr(QH_{\eta}),
\end{align}
where we are supposing that $\omega(X,\eta)$ depends smoothly from the hyperparameters. This justifies the heuristic that in a neighborhood of the feasible point, we can compute the derivative with respect to $(X,\eta)$ supposing the dual variable $\omega$ fixed.

\newpage

\end{document}